\newcommand{\RRR}[1]{{\color{black}{\sc}#1}}
\newcommand{\RRRR}[1]{{\color{black}{\sc}#1}}
\definecolor{pink}{rgb}{1,0.94,1}
\def\method{PGODE}
\newcommand{\RR}[1]{{\color{black}{\sc}#1}}
\theoremstyle{plain}
\newtheorem{theorem}{Theorem}[section]
\newtheorem{lemma}[theorem]{Lemma}
\theoremstyle{definition}
\theoremstyle{remark}
\icmltitlerunning{PGODE: Towards High-quality System Dynamics Modeling}
\begin{document}

\twocolumn[
\icmltitle{PGODE: Towards High-quality System Dynamics Modeling}


\icmlsetsymbol{equal}{*}

\begin{icmlauthorlist}
\icmlauthor{Xiao Luo}{a}
\icmlauthor{Yiyang Gu}{b}
\icmlauthor{Huiyu Jiang}{c}
\icmlauthor{Hang Zhou}{d}
\icmlauthor{Jinsheng Huang}{b}
\icmlauthor{Wei Ju}{b}\\
\icmlauthor{Zhiping Xiao}{a}
\icmlauthor{Ming Zhang}{b}
\icmlauthor{Yizhou Sun}{a}
\end{icmlauthorlist}

\icmlaffiliation{a}{Department of Computer Science, University of California, Los Angeles, USA}
\icmlaffiliation{b}{National Key Laboratory for Multimedia Information Processing, School of Computer Science, Peking University}
\icmlaffiliation{c}{Department of Statistics and Applied Probability, University of California, Santa Barbara, USA}
\icmlaffiliation{d}{Department of Statistics, University of California, Davis, USA}

\icmlcorrespondingauthor{Xiao Luo}{xiaoluo@cs.ucla.edu}

\icmlkeywords{Machine Learning, ICML}

\vskip 0.3in
]



\printAffiliationsAndNotice{} 

\begin{abstract}
This paper studies the problem of modeling multi-agent dynamical systems, where agents could interact mutually to influence their behaviors. Recent research predominantly uses geometric graphs to depict these mutual interactions, which are then captured by powerful graph neural networks (GNNs). However, predicting interacting dynamics in challenging scenarios such as out-of-distribution shift and complicated underlying rules remains unsolved. In this paper, we propose a new approach named \RR{\underline{P}rototypical \underline{G}raph \underline{ODE} (\method{})} to address the problem. The core of \method{} is to incorporate prototype decomposition from contextual knowledge into a \textit{continuous} graph ODE framework. Specifically, \method{} employs representation disentanglement and system parameters to extract both object-level and system-level contexts from historical trajectories, which allows us to explicitly model their independent influence and thus enhances the \textit{generalization} capability under system changes. Then, we integrate these disentangled latent representations into a graph ODE model, which determines a combination of various interacting prototypes for enhanced model \textit{expressivity}. The entire model is optimized using an end-to-end variational inference framework to maximize the likelihood. Extensive experiments in both in-distribution and out-of-distribution settings validate the superiority of \method{} compared to various baselines. 

\end{abstract}

\vspace{-0.5cm}
\section{Introduction}

Multi-agent dynamical systems~\cite{huang2023neuralstagger} are ubiquitous in the real world where agents can be vehicles~\cite{yildiz2022learning} and microcosmic particles~\cite{shao2022transformer}. These agents could have complicated interactions resulting from behavioral or mechanical influences, which result in complicated future trajectories of the whole system. \RRR{Modeling the interacting dynamics is a crucial challenge in machine learning with broad applications in fluid mechanics~\cite{pfaff2020learning,mayr2023boundary}, autonomous driving~\cite{yu2020spatio,zhu2023biff}, and molecular dynamics~\cite{wu2024equivariant,xu2023eqmotion}.} Extensive time-series approaches based on recurrent neural networks~\cite{weerakody2021review} and Transformers~\cite{zhou2021informer,chen2023contiformer,chen2024multi} are generally designed for single-agent dynamical systems~\cite{fotiadis2023disentangled}, which fall short when it comes to capturing the intricate relationships among interacting objects. To address this gap, geometric graphs~\cite{kofinas2021roto} are usually employed to represent the interactions between objects where nodes represent individual objects, and edges are built when a connection exists between two nodes. These connections can be obtained from geographical distances between atoms in molecular dynamics~\cite{li2022graph} and underlying equations in mechanical systems~\cite{huang2020learning}.

In the literature, graph neural networks (GNNs)~\cite{kipf2016semi,xu2019powerful,zheng2022graph,li2022finding,he2022convolutional} have been increasingly prevailing for learning from geometric graphs in interacting dynamical systems~\cite{pfaff2020learning,shao2022transformer,sanchez2020learning,han2022learning,meirom2021controlling,yildiz2022learning}. These GNN-based approaches primarily focus on predicting the future states of dynamic systems with the message passing mechanism. Specifically, they begin with encoding the states of trajectories and then iteratively update each node representation by incorporating information from its adjacent nodes, which effectively captures the complex interacting dynamics among the objects in systems.

Despite the significant advancements, GNN-based approaches often suffer from performance decreasement in challenging scenarios such as long-term dynamics~\cite{lippe2023pde}, complicated governing rules~\cite{gu2022stochastic}, and out-of-distribution shift~\cite{dendorfer2021mg}. As a consequence, developing a high-quality data-driven model requires us to consider the following critical points: (1) \textit{Capturing Continuous Dynamics}. The majority of existing methods predict the whole trajectories in an autoregressive manner~\cite{pfaff2020learning,shao2022transformer,sanchez2020learning}, which iteratively feed next-time predictions back into the input. These rollouts could lead to error accumulation and thus fail to capture long-term dynamics accurately. (2) \textit{Expressivity}. There are a variety of interacting dynamical systems governed by complex partial differential equations (PDEs) in physics and biology~\cite{rao2023encoding,chen2023implicit}. Therefore, a high-quality model with strong \textit{expressivity} is anticipated for sufficient learning. (3) \textit{Generalization}. In practical applications, the distributions of training and test trajectories could differ due to variations in system parameters~\cite{sanchez2020learning,li2023transferable}. Current data-driven models could perform poorly when confronting system changes during the inference phase~\cite{goyal2022inductive}.

In this paper, we propose a novel approach named \RR{\underline{P}rototypical \underline{G}raph \underline{ODE} (\method{})} for complicated interacting dynamics modeling. \RR{The core of PGODE lies in exploring disentangled contexts, i.e., object states and system states, inferred from historical trajectories for graph ODE with high \textit{expressivity} and \textit{generalization}.} To begin, we extract both object-level and system-level contexts via message passing and attention mechanisms for subsequent dynamics modeling. Object-level contexts refer to individual attributes such as initial states and local heterophily~\cite{luan2022revisiting}, while system-level contexts refer to shared parameters such as temperature and viscosity. To improve generalization under system changes, we focus on two strategies. First, we enhance the invariance of object-level contexts under system changes through representation disentanglement. Second, we establish a connection between known system parameters and system-level latent representations. Furthermore, we incorporate this contextual information into a graph ODE framework to capture long-term dynamics through \textit{continuous} evolution instead of discrete rollouts. More importantly, we introduce a set of learnable GNN prototypes that can be trained to represent different interaction patterns. The weights for each object are then derived from its hierarchical representations to provide individualized dynamics. Our framework can be illustrated from a mixture-of-experts perspective, which boosts the \textit{expressivity} of the model. Finally, we integrate our method into an end-to-end variational inference framework to optimize the evidence lower bound (ELBO) of the likelihood. Comprehensive experiments in different settings validate the superiority of \method{} in comparison to state-of-the-art approaches.

The contributions of this paper can be summarized in three points: (1) \textit{New Connection.} To the best of our knowledge, this work is the first to connect context mining with a \RR{prototypical graph ODE approach} for modeling challenging interacting dynamics.
(2) \textit{Methodology}. We extract hierarchical contexts with representation disentanglement and system parameters, which are then integrated into a graph ODE model that utilizes \RR{prototype decomposition}.
(3) \textit{Superior Performance.} Extensive experiments validate the efficacy of our approach in different challenging settings.

\section{Background}

\textbf{Problem Definition.}
\RR{Given a multi-agent dynamical system, we characterize the agent states and interaction at the $t$-th timestamp as a graph $G^t = (\mathcal{V}, \mathcal{E}^t, \bm{X}^t)$, where each node in $\mathcal{V}$ is an object, $\mathcal{E}^t$ comprises all the edges and $\bm{X}^t$ is the object attribute matrix. $N$ represents the number of objects. Given the observations $G^{1:T_{obs}} = \{G^1, \cdots, G^{T_{obs}} \}$, our goal is to learn a model capable of predicting the future trajectories $\bm{X}^{T_{obs}+1: T}$. \RRRR{Our interacting dynamics system is governed by a set of equations with time-invariant system parameters denoted as $\bm{\xi}$.} Different values of parameters $\bm{\xi}$ could influence underlying dynamical principles, leading to potential shift in trajectory distributions. As a consequence, it is essential to extract contextual information related to both system parameters and node states from historical observations for high-quality future trajectory predictions.}

\begin{figure*}[!t]
    \centering
    \includegraphics[width=0.9\textwidth]{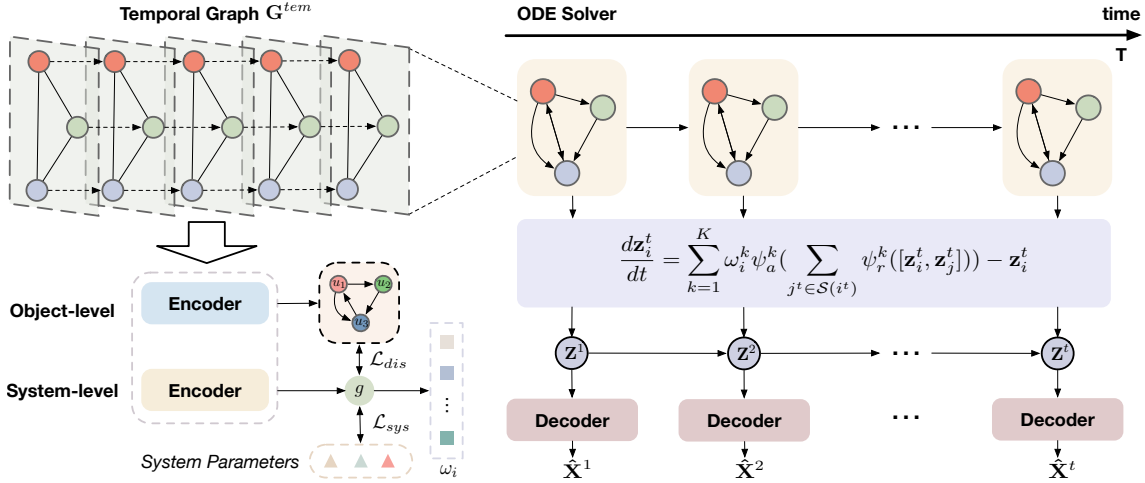}
    \vspace{-0.3cm}
    \caption{\RRRR{An overview of the proposed \method{}. Our \method{} first constructs a temporal graph and then utilizes different encoders to extract object-level and system-level contexts using representation disentanglement and system parameters. These contexts would generate weights for a prototypical graph ODE framework, which models the evolution of interacting objects. In the end, the latent states of objects are fed into a decoder to output the trajectories at any timestamp.}   
    }
    \label{fig:framework}
    \vspace{-0.4cm}
\end{figure*}

\textbf{Neural ODEs for Multi-agent Dynamical Systems.}
Neural ODEs have been shown effective in modeling various dynamical systems~\cite{chen2018neural,huang2021coupled,dupont2019augmented}. For single-agent dynamical systems, the evolution of latent representations $\bm{z}^t$ can be expressed via a given ODE $\frac{d\bm{z}^t}{dt} = f(\bm{z}^t)$. Then, the entire trajectory of the system can be determined using
$\bm{z}^{T}=\bm{z}^{0}+\int_{t=0}^{T} f\left(\bm{z}^{t}\right) d t$.
For multi-agent dynamical systems, the formulation can be extended as follows:
\begin{equation}\label{eq:multi-agent}
\bm{z}_{i}^{T}=\bm{z}_{i}^{0}+\int_{t=0}^{T} f_{i}\left(\bm{z}_{1}^{t}, \bm{z}_{2}^{t} \cdots \bm{z}_{N}^{t}\right) d t,
\end{equation}
where $\bm{z}_i^t$ represents the hidden embedding for the object $i$ at the timestamp $t$. $f_{i}$ models the interacting dynamics specifically for object $i$. With Eqn. \ref{eq:multi-agent}, we can calculate $\bm{z}_i^t$ using different numerical solvers including Runge-Kutta~\cite{schober2019probabilistic} and Leapfrog~\cite{zhuang2021mali}, which produce accurate predictions of future trajectories in the multi-agent systems using a decoder~\cite{luo2023hope}.

\section{The Proposed Approach}

This paper introduces a novel approach \method{} for modeling interacting system dynamics in challenging scenarios such as out-of-distribution shift and complicated underlying rules. The core of \method{} lies in exploring disentangled contexts for prototype decomposition for a high-quality graph ODE framework. Specifically, we first construct a temporal graph to learn disentangled object-level and system-level contexts from historical data and system parameters. These contexts further determine \RR{prototype decomposition}, which characterizes distinct interacting patterns in a graph ODE framework for modeling continuous dynamics. We adopt a decoder to output the trajectories and the whole model is optimized via an end-to-end variational inference framework. An overview of \method{} is depicted in Figure \ref{fig:framework}, and the details will be presented below. 

\subsection{Hierarchical Context Discovery with Disentanglement}

A promising solution to formulating the dynamics of interacting systems is the introduction of GNNs into Eqn. \ref{eq:multi-agent} where different GNNs are tailored for distinct nodes across diverse systems. \RRRR{Given the basic dynamical principles, the interacting dynamics of each object are influenced by both system-level and object-level contexts.} System-level contexts include temperature, viscosity, and coefficients in underlying equations~\cite{rama2017transition}, which are shared in the whole system. Object-level contexts refer to object attributes such as initial states, and local heterophily~\cite{luan2022revisiting}, which give rise to distinct interacting patterns for individual objects. To design GNNs for a variety of objects and system configurations, it is essential to derive object-level and system-level latent embeddings from historical trajectories. Additionally, system parameters could differ between training and test datasets~\cite{kim2021reversible}, thereby leading to potential distribution shift. To mitigate its influence, we disentangle object-level and system-level embeddings with known system parameters for a more precise and independent description of complex dynamical systems.

\textbf{Object-level Contexts.} We aim to condense the historical trajectories into informative object representations. Here, we conduct the message passing procedure on a temporal graph for observation representation updating. Then, object representations are generated by summarizing all the observations using the attention mechanism~\cite{niu2021review}. 

In detail, a temporal graph is first constructed where each node represents an observation~\cite{huang2021coupled}, and edges represent temporal and spatial relationships. Temporal edges connect successive observations of the same object, while spatial edges would be built when observations from two different objects are connected at the same timestamp. In formulation, we have $NT^{obs}$ nodes in the temporal graph $G^{tem}$ and its adjacency matrix can be written as:
\begin{equation}\label{eq:temporal_graph}
\bm{A}^{tem}(i^t,{j}^{t'})=\left\{\begin{array}{ll} w_{ij}^t & t=t', \\ 1 & i=j, t'=t+1,\\ 0 & \text { otherwise, }\end{array}\right.
\end{equation}
where $i^t$ represents the observation of $i$ at timestamp $t$ and $w_{ij}^t$ is the edge weight from $G^t$. Then, we adopt the message passing mechanism to learn from the temporal graph. Denote the representation of $i^t$ at the $l$-th layer as $\bm{h}_i^{t,(l)}$, and the interaction scores can be obtained by comparing representations between the query and key spaces as follows:
\RRRR{\begin{equation}
   \alpha^{(l)}(i^t, j^{t'}) = \frac{\bm{A}^{tem}(i^t, j^{t'})}{\sqrt{d}}  (\bm{W}_{query} \hat{\bm{h}}_i^{t,(l)})^T (\bm{W}_{key} \hat{\bm{h}}_j^{t',(l)}) ,
\end{equation}}
where $d$ denotes the hidden dimension and $\hat{\bm{h}}_i^{t,(l)} = {\bm{h}}_i^{t,(l)} + TE(t)$. \RRRR{$TE(t)$ is the temporal embedding with
$\mathrm{TE}(t)[2i]=\sin \left(\frac{t}{10000^{2 i / d}}\right)$ and $\mathrm{TE}(t)[2i+1]=\cos \left(\frac{t}{10000^{2 i / d}}\right)$, which provides the temporal information for our graph convolution module to capture temporal patterns and dependencies.}
$\bm{W}_{query} \in \mathbb{R}^{d\times d} $ and $\bm{W}_{key} \in \mathbb{R}^{d\times d}$ are two weight matrices for feature transformation. Then, we update each representation using its neighborhood as follows:
\begin{equation}
    \bm{h}_{i}^{t,(l+1)}=\bm{h}_{i}^{t,(l)}+\sigma\left(\sum_{j^{t'} \in \mathcal{S}({i^t})} \alpha^{(l)}(i^t, j^{t'}) \bm{W}_{value} \hat{\bm{h}}_{j}^{t',(l)} \right),
\end{equation}
where $\bm{W}_{value} \in \mathbb{R}^{d\times d}$ is to project representations into values and $\mathcal{S}(\cdot)$ collects all the neighboring nodes. In the end, we summarize all these observation representations for every object $i$ into a latent representation $\bm{u}_i$ using the attention mechanism as follows:
\begin{equation}\label{eq:local}
    \bm{q}_i^t = \bm{h}_i^{t,(L)} + \mathrm{TE}(t), \bm{u}_i = \frac{1}{N^{obs}}\sum_{t=1}^{N^{obs}} \sigma(\bm{W}_{sum} \bm{q}_i^t),
\end{equation}
in which $\bm{W}_{sum}$ is for feature transformation. In this manner, we incorporate semantics from both the observed trajectories and geometric structures into expressive object-level latent representations, i.e., $\{\bm{u}_i\}_{i=1}^N$ for predicting future complicated interacting dynamics in systems.

\textbf{System-level Contexts.} In real-world applications, system parameters may vary between training and test datasets, leading to out-of-distribution shift in trajectories~\cite{mirza2022efficient,ragab2023adatime}. To capture these variations and enhance model performance, we employ a separate network to infer system-level contexts from historical trajectories, which are guided by system parameters in the training data. Moreover, we employ mutual information minimization~\cite{sun2019infograph,feng2023towards} to disentangle object-level and system-level representations, which allows for a clear separation of influences and thus enables the invariance of object-level contexts under system changes.

In particular, we employ the same network architecture but with different parameters to generate the latent representation $\bm{u}'_i$ for object $i$. Then, a pooling operator is adopted to summarize all these object-level representations into a system-level representation $\bm{g}$ as $    \bm{g} = \sum_{i=1}^N \bm{u}'_i$.
To capture contexts from system parameters, we maximize the mutual information between the system-level representation and \RRRR{known parameters $\bm{\xi}$}, i.e., $I(\bm{g};\bm{\xi})$. Meanwhile, to disentangle object-level and system-level latent representation, we minimize their mutual information, i.e., $I(\bm{g};\bm{u}_i)$, which enables us to better handle the variations introduced by out-of-distribution system parameters. In our implementation, we make use of Jensen-Shannon mutual information estimator $T_\gamma(\cdot, \cdot)$~\cite{chen2019locality} with parameters $\gamma$, and the loss objective for learning system parameters can be:
\begin{equation}
\begin{aligned}
\mathcal{L}_{sys}&= \frac{1}{|\mathcal{P}|}\sum_{(\bm{g},\bm{\xi}) \in \mathcal{P}} -sp(-T_\gamma(\bm{g},\bm{\xi})) \\ &+ \frac{1}{|\mathcal{P}|^2} \sum_{(\bm{g},\bm{\xi}) \notin \mathcal{P}}sp(-T_\gamma(\bm{g},\bm{\xi})),
\end{aligned}
\end{equation}
where $sp(\bm{x})=\log(1+e^{\bm{x}})$ denotes the softplus function, \RRRR{$\bm{\xi}$ denotes the system parameters in dynamical systems, and $\mathcal{P}$ collects all the positive pairs from the same system.} Similarly, the loss objective for representation disentanglement is formulated as:
\begin{equation}
\begin{aligned}
    \mathcal{L}_{dis}&= max_{\gamma'} \{ \frac{1}{|\mathcal{P}'|}\sum_{(\bm{g},\bm{u}_i) \in \mathcal{P}'} sp(-T_{\gamma'}(\bm{g},\bm{u}_i)) \\ 
    &+ \frac{1}{|\mathcal{P}'| |\mathcal{P}| } \sum_{(\bm{g},\bm{u}_i) \notin \mathcal{P}'}-sp(-T_{\gamma'}(\bm{g},\bm{u}_i))\},
\end{aligned}
\end{equation}
where $T_{\gamma'}$ is optimization in an adversarial manner and $\mathcal{P}'$ collects all the positive object-system pairs. Differently, $T_{\gamma'}$ is trained adversarially for precise measurement of mutual information. On this basis, we establish the connection between system-level contexts and explicit parameters while simultaneously minimizing their impact on the object-level contexts through representation disentanglement. In this way, our model separates and accurately captures the influence of these two factors, enhancing the generalization capacity when system parameters vary during evaluation.

\subsection{\RR{Prototypical Graph ODE}}

After extracting context embeddings, we intend to integrate them into a graph ODE framework for multi-agent dynamic systems. However, training a separate GNN for each node would introduce an excessive number of parameters, which could result in overfitting and a complicated optimization process~\cite{zhao2020protoviewer,cini2023taming,guo2023newton}. To address this, we learn a set of GNN prototypes to characterize the entire GNN space, and then use \RR{prototype decomposition} for each object in the graph ODE. Specifically, we start by initializing state representations for each node and then determine the weights for each object based on both object-level and system-level contexts.

To begin, we utilize object-level contexts with feature transformation for initialization. Here, the initial state representations are sampled from an approximate posterior distribution $q(\bm{z}_i^0|G^{tem})$, which would be close to a prior distribution $p(\bm{z}_i^0)$. The mean and variance are learned from $\bm{u}_i$ as:
\begin{equation}
q\left(\bm{z}_{i}^{0} \mid G^{tem}\right)=\mathcal{N}\left(\psi^{m}\left(\bm{u}_{i}\right), \psi^{v}\left(\bm{u}_{i}\right)\right),
\end{equation}
where $\psi^{m}(\cdot)$ and $\psi^{v}(\cdot)$ are two feed-forward networks (FFNs) to compute the mean and variance. Then, we introduce $K$ GNN prototypes, each with two FFNs $\psi_r^k(\cdot)$ and $\psi_a^k(\cdot)$ for relation learning and feature aggregation, respectively. The updating rule of the $k$-th prototypes for object $i$ is formulated as follows:
\begin{equation}
f_{i}^k\left(\bm{z}_{1}^{t}, \bm{z}_{2}^{t} \cdots \bm{z}_{N}^{t}\right)= \psi_a^k (\sum_{j^t\in \mathcal{S}(i^t)} \psi_r^k([\bm{z}_i^t,\bm{z}_j^t])),    
\end{equation}
\RRRR{where $j^t$ represents the neighbor of $i$ at timestamp $t$ and the order of $\bm{z}_i^t$ and $\bm{z}_j^t$ also matters.} 
Then, we take a weighted combination of these GNN prototypes for each object, and the \RR{prototypical} interacting dynamics 
can be formulated as:
\begin{equation}\label{eq:ODE}
\frac{d\bm{z}_i^t}{dt}=\sum_{k=1}^K \bm{w}_{i}^k \psi_a^k (\sum_{j^t\in \mathcal{S}(i^t)} \psi_r^k([\bm{z}_i^t,\bm{z}_j^t])) - \bm{z}_i^t.
\end{equation}
The last term indicates natural recovery, which usually benefits semantics learning in practice. To generate the weights for each object, we merge both object-level and system-level latent variables and adopt a FFN $\rho(\cdot)$ as follows:
\begin{equation}\label{eq:weight}
    \bm{w}_i = [\bm{w}_i^1, \cdots, \bm{w}_i^K ] =\rho([\bm{u}_i,\bm{g}]),
\end{equation}
where the softmax activation is adopted to ensure $\sum_{k=1}^K \bm{w}_{i}^k =1$.

\textbf{Robustness.} In this part, we discuss the robustness of the proposed \method{}. When $K=1$, Eqn. \ref{eq:ODE} would be degraded into a single-prototype system:
\begin{equation}\label{thm1-single}
			 \frac{d\bm{z}_i^t}{dt} = \psi_{a}^1 \left( \sum_{j^t \in \mathcal{S}(i^t)} \psi_r([\bm{z}_i^t, \bm{z}_j^t]) \right) - \bm{z}_i^t,
		\end{equation}
which shares the GNN function for every node. Then, the following theorem states that our model enjoys the enhanced robustness of the proposed model to perturbation~\cite{niu2020evaluating,xu2020automatic} compared with the single-prototype system as in Eqn. \ref{eq:ODE}. Consider a perturbation $\bm{\delta}$ of small magnitude $\epsilon$, such that $\|\bm{\delta}\|=\epsilon$, applied to an given input point $\bm{Z}^0$, where $\bm{Z}^{0}=(Z_{i}^{0},\ldots,Z_{N}^{0})^{\top }$, resulting $\tilde{\bm{Z}}^0=\bm{Z}^0+\bm{\delta}$. The following theorem with the proof in Appendix \ref{proof_thm1} demonstrates that the multi-prototype system is more robust than the single-prototype system.
\begin{theorem}\label{thm1}
Assume the prototype function \( \psi_a^k \) has a bounded gradient. Moreover, each prototype function \( \psi_a^k\) and \( \psi_r^k \) are Lipschitz continuous with Lipschitz constant \( L^k_{a} \) and \( L^k_r \), and \( \psi_a\) and \( \psi_r \) are for single prototype function with Lipschitz constant \( L_a  \) and \(L_r\). For the sake of simplicity, we omit the last term $-\bm{z}_i^t$ in Eqn. \ref{eq:ODE} and Eqn. \(\ref{thm1-single}\) since it can be incorporated in the revised GNN prototypes. Denote $L^k= L^k_{a}  L^k_{r} $ and $L=L_aL_r$, if $\mathbb{E}(L^k)<\mathbb{E}(L)$, $\operatorname{Var}(L^k)<\operatorname{Var}(L)$ hold for all $k=1,\ldots,K$, our multi-prototype system described in Eqn. \(\ref{eq:ODE}\) will have smaller mean and variance bounds for the Lyapunov error function \({\|\bm{e}^t\|^2}/{2}\) compared to the single-prototype system described in Eqn. \(\ref{thm1-single}\).
\end{theorem}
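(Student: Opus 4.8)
The plan is to track how an initial perturbation propagates through the ODE flow via the Lyapunov function $V(t)=\|\bm{e}^t\|^2/2$, where $\bm{e}^t=\tilde{\bm{Z}}^t-\bm{Z}^t$ is the difference between the perturbed and unperturbed trajectories, so that $\bm{e}^0=\bm{\delta}$ and $V(0)=\epsilon^2/2$. Differentiating and applying Cauchy--Schwarz gives $\frac{dV}{dt}=\langle \bm{e}^t,\dot{\bm{e}}^t\rangle \le \|\bm{e}^t\|\,\|\dot{\bm{e}}^t\|$, so the entire argument reduces to bounding the velocity difference $\|\dot{\bm{e}}^t\|$ node by node and then integrating.

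First I would bound the per-node velocity difference using the Lipschitz hypotheses. For the single-prototype field in Eqn.~\ref{thm1-single}, chaining the Lipschitz continuity of $\psi_a$ (constant $L_a$) with that of $\psi_r$ (constant $L_r$) through the neighborhood aggregation $\sum_{j^t\in\mathcal{S}(i^t)}$ produces a bound of the form $\|\dot{\bm{e}}_i^t\|\le C_i\,L_a L_r\,\|\bm{e}^t\|$, where $C_i$ is a purely structural factor determined by $\mathcal{S}(i^t)$ and hence identical across both systems. For the multi-prototype field in Eqn.~\ref{eq:ODE}, the triangle inequality distributes over the convex combination $\sum_k \bm{w}_i^k \psi_a^k(\cdot)$; since $\bm{w}_i^k\ge 0$ and $\sum_k \bm{w}_i^k=1$, this replaces $L=L_aL_r$ by the effective constant $\bar L_i=\sum_k \bm{w}_i^k L^k$ with $L^k=L_a^k L_r^k$, leaving the same structural factor $C_i$.

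Next I would assemble these node-wise estimates into $\frac{dV}{dt}\le C\,\bar L\,V(t)$ for the multi-prototype system and $\frac{dV}{dt}\le C\,L\,V(t)$ for the single-prototype one, where $C$ collects the common aggregation constants; the bounded-gradient assumption on $\psi_a^k$ guarantees the field is well behaved so that the flow exists and these estimates are valid along it. Grönwall's inequality then integrates to $V(t)\le V(0)\,e^{C\bar L t}=\tfrac{\epsilon^2}{2}e^{C\bar L t}$, exhibiting the Lyapunov error bound as a monotone increasing function of the effective Lipschitz constant, namely $\bar L=\sum_k \bm{w}_i^k L^k$ in the multi-prototype case and $L$ in the single-prototype case.

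Finally, treating the Lipschitz constants as random variables and the weights $\bm{w}_i^k$ as fixed convex coefficients, I would compare the two bounds in mean and variance. From $\sum_k \bm{w}_i^k=1$ and $\mathbb{E}(L^k)<\mathbb{E}(L)$ we get $\mathbb{E}(\bar L)=\sum_k \bm{w}_i^k\,\mathbb{E}(L^k)<\mathbb{E}(L)$, and under independence of the prototypes $\operatorname{Var}(\bar L)=\sum_k (\bm{w}_i^k)^2\operatorname{Var}(L^k)<\big(\sum_k (\bm{w}_i^k)^2\big)\operatorname{Var}(L)\le \operatorname{Var}(L)$, where the last step uses $\sum_k (\bm{w}_i^k)^2\le (\sum_k \bm{w}_i^k)^2=1$. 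Monotonicity of $x\mapsto e^{Cxt}$ then transfers both inequalities to the Lyapunov error bound. The main obstacle I anticipate is not the convex-combination estimate itself but making the surrounding steps rigorous: verifying that the aggregation factor $C$ is genuinely identical for both fields, justifying the independence (or at least the uncorrelatedness) of the $L^k$ on which the variance reduction depends, and handling the nonlinearity of $x\mapsto e^{Cxt}$ carefully, so that the comparison is stated at the level of the exponent $\bar L$ and passed to the error bound by monotonicity rather than by erroneously exchanging the expectation with the exponential.
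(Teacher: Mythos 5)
Your proposal follows essentially the same route as the paper's proof: the same Lyapunov function $V=\|\bm{e}^t\|^2/2$, the same Lipschitz chaining of $\psi_a$ and $\psi_r$ through the neighborhood aggregation to obtain the effective constant $\sum_k \bm{w}_i^k L^k$ versus $L$, and the same mean/variance comparison using $\sum_k (\bm{w}_i^k)^2 \le 1$. The only substantive additions are the Gr\"onwall integration step (the paper stops at comparing the bounds on $dV/dt$) and your explicit flagging of the independence assumption needed for $\operatorname{Var}\bigl(\sum_k \bm{w}_i^k L^k\bigr)=\sum_k (\bm{w}_i^k)^2\operatorname{Var}(L^k)$, which the paper uses without stating.
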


\textbf{A Mixture-of-Experts Perspective.} We demonstrate that our graph ODE model can be interpreted through the lens of the mixture of experts (MoE)~\cite{du2022glam,wang2024graph,liu2023fair}. Specifically, each prototype serves as an ODE expert, while $\bm{w}_i$ acts as the gating weights that control the contribution of each expert. \RRRR{Through this, we are the first to get the graph ODE married with MoE, enhancing the expressivity to capture complex interacting dynamics as in previous works~\cite{wang2022learning,wang2020deep}.} More importantly, different from previous works that employ black-box routing functions~\cite{zhou2022mixture}, the routing function in our \method{} is derived from hierarchical contexts with representation disentanglement, which further equips our model with the generalization capability to handle potential shift in data distributions. 
In particular, given a change in the graph structure or feature distribution, the multi-prototype system Eqn. \ref{eq:ODE} can adjust the weights \( \{\bm{w}_i^{k}\} \) to accommodate this change, potentially identifying a new combination of prototypes that better fits the altered data. This flexibility is quantified by the ability to perform gradient-based updates on the weights. In contrast, Eqn. \ref{thm1-single} may fail to adapt as readily since it relies on a single function \( \psi_a \) without the benefit of re-weighting different prototypes. 

\textbf{Existence and Uniqueness.} We give a theoretical analysis about the existence and uniqueness of our proposed graph ODE to show that it is well-defined under certain conditions.

\begin{lemma}\label{lemma}
We first assume that the learnt functions $\psi_r^k:\ \mathbb{R}^{2d}\to\mathbb{R}^{d},\psi_a^k:\mathbb{R}^{d}\to\mathbb{R}^{d}$ have bounded gradients. In other words, there exists $ A, R>0$, such that the following Jacobian matrices have the bounded matrix norms:
\begin{equation}
\begin{aligned}
        J_{\psi_r^k}([\bm{x},\bm{y}]) = & \begin{pmatrix}
\frac{\partial \psi_{r,1}^k}{\partial x_1} & \cdots & \frac{\partial \psi_{r,1}^k}{\partial x_d} & \frac{\partial \psi_{r,1}^k}{\partial y_1} & \cdots & \frac{\partial \psi_{r,1}^k}{\partial y_d} \\
\vdots & \ddots & \vdots & \vdots & \ddots & \vdots \\
\frac{\partial \psi_{r,d}^k}{\partial x_1} & \cdots & \frac{\partial  \psi_{r,d}^k}{\partial x_d} & \frac{\partial  \psi_{r,d}^k}{\partial y_1} & \cdots & \frac{\partial  \psi_{r,d}^k}{\partial y_d}
\end{pmatrix}, \\  & \quad \quad \quad \|J_{\psi_r^k}([\bm{x},\bm{y}])\| \le R,
\end{aligned}
\end{equation}
\vspace{-0.5cm}
\begin{equation}
    J_{\psi_a^k}(\bm{x}) =\begin{pmatrix}
\frac{\partial \psi_{a,1}^k}{\partial x_1} & \cdots & \frac{\partial \psi_{a,1}^k}{\partial x_d}, \\
\vdots & \ddots & \vdots  \\
\frac{\partial \psi_{a,d}^k}{\partial x_1} & \cdots & \frac{\partial  \psi_{a,d}^k}{\partial x_d} 
\end{pmatrix},\quad \|J_{\psi_a^k}(\bm{x})\|\le A.
\end{equation}
Given the initial state $(t_0, \bm{z}_1^{t_0}, \cdots, \bm{z}_N^{t_0}, \bm{w}_1, \cdots, \bm{w}_N)$, we claim that there exists $\varepsilon>0$, such that the ODE system Eqn. \ref{eq:ODE} has a unique solution in the interval $[t_0-\varepsilon, t_0 + \varepsilon].$
\end{lemma}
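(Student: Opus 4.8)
The plan is to recast the coupled system as a single autonomous ODE on the product space and then invoke the classical Picard--Lindel\"of (Cauchy--Lipschitz) theorem, for which the only nontrivial hypothesis to verify is that the right-hand side is Lipschitz continuous in the stacked state. Concretely, I would collect all node states into $\bm{Z} = (\bm{z}_1^t, \ldots, \bm{z}_N^t)^\top \in \mathbb{R}^{Nd}$ and write Eqn. \ref{eq:ODE} as $\frac{d\bm{Z}}{dt} = F(\bm{Z})$, where the $i$-th block of $F$ is $F_i(\bm{Z}) = \sum_{k=1}^K \bm{w}_i^k \psi_a^k\bigl(\sum_{j^t \in \mathcal{S}(i^t)} \psi_r^k([\bm{z}_i^t, \bm{z}_j^t])\bigr) - \bm{z}_i^t$. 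The gating weights $\bm{w}_i^k$, the neighborhood sets $\mathcal{S}(i^t)$, and the number of prototypes $K$ are all fixed once the initial state $(t_0, \bm{z}_1^{t_0}, \ldots, \bm{z}_N^{t_0}, \bm{w}_1, \ldots, \bm{w}_N)$ is given, so $F$ is genuinely autonomous and it suffices to establish a Lipschitz bound.

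The first key step is to upgrade the bounded-Jacobian assumption into Lipschitz estimates. For any continuously differentiable $g$ with $\|J_g\| \le M$, the mean value inequality (integrating the Jacobian along the segment joining two points) yields $\|g(\bm{a}) - g(\bm{b})\| \le M \|\bm{a} - \bm{b}\|$; applying this gives that each $\psi_r^k$ is $R$-Lipschitz and each $\psi_a^k$ is $A$-Lipschitz on their respective domains.

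The second step, and the main bookkeeping obstacle, is to propagate these constants through the neighborhood aggregation, since the concatenation $[\bm{z}_i^t, \bm{z}_j^t]$ couples the central state $\bm{z}_i^t$ into \emph{every} summand while each neighbor state $\bm{z}_j^t$ enters only one summand. I would bound $\|\psi_r^k([\bm{z}_i, \bm{z}_j]) - \psi_r^k([\tilde{\bm{z}}_i, \tilde{\bm{z}}_j])\| \le R(\|\bm{z}_i - \tilde{\bm{z}}_i\| + \|\bm{z}_j - \tilde{\bm{z}}_j\|)$, sum over the finitely many neighbors, and then compose with the $A$-Lipschitz map $\psi_a^k$, using $|\bm{w}_i^k| \le 1$ (guaranteed by the softmax normalization $\sum_k \bm{w}_i^k = 1$). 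The resulting per-node bound is controlled by the maximum degree $d_{\max} = \max_i |\mathcal{S}(i^t)|$ and contributes a factor proportional to $K A R d_{\max}$, while the linear recovery term $-\bm{z}_i^t$ adds a further Lipschitz constant of $1$. Summing the squared per-block bounds over the $N$ nodes produces a single finite Lipschitz constant $\Lambda$ for $F$ on all of $\mathbb{R}^{Nd}$.

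Finally, with $F$ established as (globally, hence locally) Lipschitz and continuous, Picard--Lindel\"of applies on a neighborhood of $t_0$: there exists $\varepsilon > 0$ such that Eqn. \ref{eq:ODE} admits a unique solution on $[t_0 - \varepsilon, t_0 + \varepsilon]$. I expect the only real subtlety to be the careful treatment of the concatenation and the neighbor sum in the aggregation step, because that is where the central state is replicated across summands; everything else is a finite composition of Lipschitz maps, so the constants combine without difficulty. In fact, since the bound is uniform over $\mathbb{R}^{Nd}$, the global Lipschitz property could be leveraged to extend the solution to all of $\mathbb{R}$, though the lemma only requires the local statement.
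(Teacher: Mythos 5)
Your proposal is correct and follows essentially the same route as the paper's proof: stack the node states into $\bm{Z}\in\mathbb{R}^{Nd}$, convert the bounded-Jacobian hypotheses into Lipschitz constants via the mean value inequality, split the concatenation $[\bm{z}_i,\bm{z}_j]$ by the triangle inequality, propagate the constants through the neighbor sum, the composition with $\psi_a^k$, the bounded gating weights, and the $-\bm{z}_i^t$ term, and finally invoke Picard--Lindel\"of. The only cosmetic differences are that the paper implements the argument-extraction with explicit selection matrices $A(i,j)$ and a small auxiliary lemma on combinations of Lipschitz maps, and bounds the neighbor count by $N$ where you use $d_{\max}$, yielding the same conclusion with a slightly different constant.
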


The proof is shown in Appendix \ref{proof}. Our analysis demonstrates that based on given observations, future trajectories are predictable using our graph ODE, which is essential in dynamics modeling~\cite{chen2018neural,kong2020sde}.

\subsection{Decoder and Optimization}

Finally, we introduce a decoder to forecast future trajectories, along with an end-to-end variational
inference framework for the maximization of the likelihood.

\begin{table*}[t]
  \tabcolsep=6.5pt
  \centering
  \caption{Mean Squared Error (MSE) $\times 10^{-2}$ on physical dynamics simulations. }\label{mainresult:physical}
  \resizebox{\textwidth}{!}{
  \begin{tabular}{l|l|cc|cc|cc|cc|cc|cc}
  \toprule
  \multirow{2}{*}{Dataset} & Prediction Length & \multicolumn{2}{c|}{12 \footnotesize{(ID)}} & \multicolumn{2}{c|}{24 \footnotesize{(ID)}} & \multicolumn{2}{c|}{36 \footnotesize{(ID)}} & \multicolumn{2}{c|}{12 \footnotesize{(OOD)}} & \multicolumn{2}{c|}{24 \footnotesize{(OOD)}} & \multicolumn{2}{c}{36 \footnotesize{(OOD)}} \\ 
   & Variable & $q$ & $v$ & $q$ & $v$ & $q$ & $v$ & $q$ & $v$ & $q$ & $v$ & $q$ & $v$  \\ \midrule
  \multirow{8}{*}{\textit{Springs}}& LSTM& 0.287  & 0.920  & 0.659  & 2.659  & 1.279  & 5.729  & 0.474  & 1.157  & 0.938  & 2.656  & 1.591  & 5.223   \\
  &GRU& 0.394  & 0.597  & 0.748  & 1.856  & 1.248  & 3.446  & 0.591  & 0.708  & 1.093  & 1.945  & 1.671  & 3.423  \\
  &NODE& 0.157  & 0.564  & 0.672  & 2.414  & 1.608  & 6.232  & 0.228  & 0.791  & 0.782  & 2.530  & 1.832  & 6.009   \\
  &LG-ODE&	0.077  & 0.268  & 0.155  & 0.513  & 0.527  & 2.143  & 0.088  & 0.299  & 0.179  & 0.562  & 0.614  & 2.206  \\
  &MPNODE& 0.076  & 0.243  & 0.171  & 0.456  & 0.600  & 1.737  & 0.094  & 0.249  & 0.212  & 0.474  & 0.676  & 1.716   \\
  &SocialODE& 0.069  & 0.260  & 0.129  & 0.510  & 0.415  & 2.187  & 0.079  & 0.285  & 0.153  & 0.570  & 0.491  & 2.310   \\
  &HOPE& 0.070  & 0.176  & 0.456  & 0.957  & 2.475  & 5.409  & 0.076  & 0.221  & 0.515  & 1.317  & 2.310  & 5.996  \\
  & \cellcolor{pink}\method{} (Ours) & \cellcolor{pink}\textbf{0.035}  & \cellcolor{pink}\textbf{0.124}  & \cellcolor{pink}\textbf{0.070}  & \cellcolor{pink}\textbf{0.262}  & \cellcolor{pink}\textbf{0.296}  & \cellcolor{pink}\textbf{1.326} & \cellcolor{pink}\textbf{0.047} & \cellcolor{pink}\textbf{0.138}  & \cellcolor{pink}\textbf{0.088}  & \cellcolor{pink}\textbf{0.291}  & \cellcolor{pink}\textbf{0.309}  & \cellcolor{pink}\textbf{1.337}   \\ \midrule
  \multirow{8}{*}{\textit{Charged}}& LSTM&  0.795  & 3.029  & 2.925  & 3.734  & 6.569  & 4.331  & 1.127  & 3.027  & 3.988  & 3.640  & 8.185  & 4.221     \\
  &GRU&  0.781  & 2.997  & 2.805  & 3.640  & 5.969  & 4.147  & 1.042  & 3.028  & 3.747  & 3.636  & 7.515  & 4.101    \\
  &NODE& 0.776  & 2.770  & 3.014  & 3.441  & 6.668  & 4.043  & 1.124  & 2.844  & 3.931  & 3.563  & 8.497  & 4.737    \\
  &LG-ODE&	0.759  & 2.368  & 2.526  & 3.314  & 5.985  & 5.618  & 0.932  & 2.551  & 3.018  & 3.589  & 6.795  & 6.365   \\
  &MPNODE& 0.740  & 2.455  & 2.458  & 3.664  & 5.625  & 6.259  & 0.994  & 2.555  & 2.898  & 3.835  & 6.084  & 6.797  \\
  &SocialODE& 0.662  & 2.335  & 2.441  & 3.252  & 6.410  & 4.912  & 0.894  & 2.420  & 2.894  & 3.402  & 6.292  & 6.340    \\
  &HOPE& 0.614  & 2.316  & 3.076  & 3.381  & 8.567  & 8.458  & 0.878  & 2.475  & 3.685  & 3.430  & 10.953  & 9.120   \\ 
  &\cellcolor{pink}\method{} (Ours) & \cellcolor{pink}\textbf{0.578}  & \cellcolor{pink}\textbf{2.196}  & \cellcolor{pink}\textbf{2.037} & \cellcolor{pink}\textbf{2.648}  & \cellcolor{pink}\textbf{4.804}  & \cellcolor{pink}\textbf{3.551}  & \cellcolor{pink}\textbf{0.802}  & \cellcolor{pink}\textbf{2.135}  & \cellcolor{pink}\textbf{2.584}  & \cellcolor{pink}\textbf{2.663}  & \cellcolor{pink}\textbf{5.703}  & \cellcolor{pink}\textbf{3.703}  \\ \bottomrule
  \end{tabular}
  }
  \vspace{-0.1cm}
\end{table*}

\begin{figure*}
  \centering
  \includegraphics[width=0.93\textwidth]{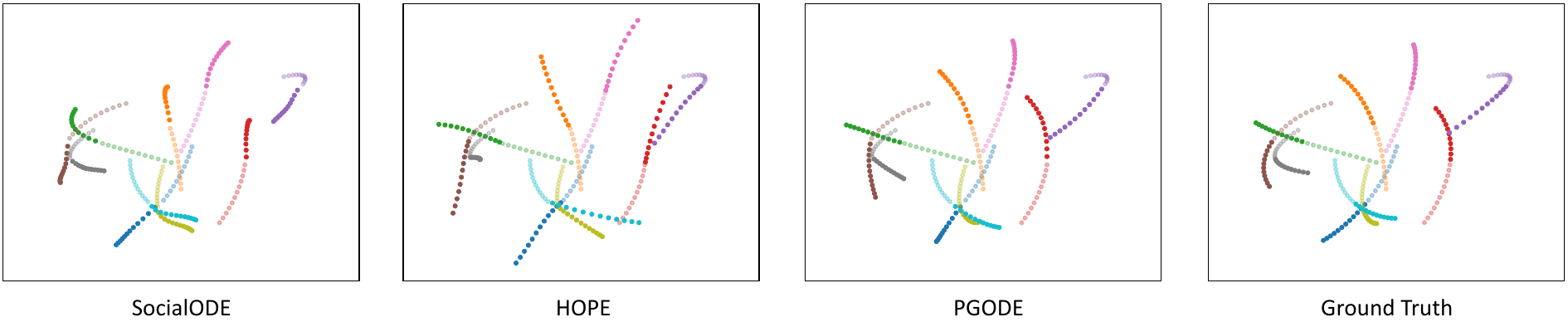}
  \vspace{-0.2cm}
  \caption{\RRRR{Visualization of different methods on \textit{Springs}. Semi-transparent paths denote observed trajectories and solid paths represent our predictions.}}
  \label{fig:vis_phy}
  \vspace{-0.5cm}
\end{figure*}

In particular, we build a connection between latent states and trajectories by calculating the likelihood for each observation $p(\bm{x}_i^t|\bm{z}_i^t)$. \RR{Following the maximum likelihood estimation of a Gaussian distribution,} here we solely produce the mean of each distribution, i.e., $\bm{\mu}_{i}^t = \phi(\bm{z}_i^t)$, where $\phi(\cdot)$ is an FFN serving as the decoder implemented. In the variational inference framework, our model optimizes the evidence lower bound (ELBO) of the likelihood, which involves the maximization of the likelihood and the minimization of the difference between the prior and posterior distributions:
\begin{equation}\label{eq:elbo}
\begin{aligned}
    \mathcal{L}_{elbo}&=\mathbb{E}_{Z^{0} \sim \prod_{i=1}^{N} q\left(\bm{z}_{i}^{0} \mid G^{1:T_{obs}}\right)}\left[\log p(\bm{X}^{T_{obs}+1: T})\right] \\ &-\operatorname{KL}\left[\prod_{i=1}^{N} q(\bm{z}_i^0| G^{1:T_{obs}}) \| p\left(\bm{Z}^{0}\right)\right],
\end{aligned}
\end{equation}
in which \RRRR{$p\left(\bm{Z}^{0}\right)=\Pi_{i=1}^N p(\bm{z}_i^0)$ and $p(\bm{z}_i^0)$ is a Normal distribution $N(\bm{0}, \bm{I})$~\cite{kingma2019introduction}.} Eqn. \ref{eq:elbo} can be re-written into the following equation by incorporating the independence of each node:
\begin{equation}\label{eq:final}
\begin{aligned}
\mathcal{L}_{elbo}&=-\sum_{i=1}^N \sum_{t= T_{obs}+1}^T\frac{\left\|\bm{x}_{i}^{t}-\bm{\mu}_{i}^{t}\right\|^{2}}{2 \sigma^{2}} \\ & -\operatorname{KL}\left[\prod_{i=1}^{N} q(\bm{z}_i^0| G^{1:T_{obs}}) \| p\left(\bm{Z}^{0}\right)\right],
\end{aligned}
\end{equation}
in which $\sigma^2$ represents the variance of the prior distribution. To summarize, the final loss objective for the optimization is written as follows:
\begin{equation}\label{eq:final_loss}
   \mathcal{L} = \mathcal{L}_{elbo} + \mathcal{L}_{sys} + \mathcal{L}_{dis},
\end{equation}
\RR{where the last two loss terms serve as a regularization mechanism using mutual information to constrain the model parameters~\cite{xu2019multi,rhodes2021local}.} 
We have summarized the whole algorithm in Appendix \ref{alg:sec}.

\section{Experiment}

We conduct experiments on both physical and molecular dynamical systems. Each sample is split into two parts including a conditional part for initializing object-level context representations and global-level context representations, and a prediction part for supervision. Their lengths are denoted as conditional length and prediction length, respectively. We compared our \method{} with several baselines, i.e., LSTM~\cite{hochreiter1997long}, GRU~\cite{cho2014learning}, NODE~\cite{chen2018neural}, LG-ODE~\cite{huang2020learning}, MPNODE~\cite{chen2022spatio}, SocialODE~\cite{wen2022social} and HOPE~\cite{luo2023hope}. The setting details are in Appendix \ref{imple_detail}.

\begin{table*}[t]
  \tabcolsep=6.6pt
  \caption{Mean Squared Error (MSE) $\times 10^{-3}$ on molecular dynamics simulations. }\label{mainresult:molecular}
  \resizebox{\textwidth}{!}{
  \begin{tabular}{l|l|ccc|ccc|ccc|ccc}
  \toprule
  \multirow{2}{*}{Dataset} &Prediction Length & \multicolumn{3}{c|}{12 \footnotesize{(ID)}} & \multicolumn{3}{c|}{24 \footnotesize{(ID)}} & \multicolumn{3}{c|}{12 \footnotesize{(OOD)}} & \multicolumn{3}{c}{24 \footnotesize{(OOD)}}  \\ 
  & Variable & $q_x$ & $q_y$ & $q_z$ & $q_x$ & $q_y$ & $q_z$ & $q_x$ & $q_y$ & $q_z$ & $q_x$ & $q_y$ & $q_z$ \\ \midrule
  \multirow{8}{*}{\textit{5AWL}}&LSTM& 4.178  & 3.396  & 3.954  & 4.358  & 4.442  & 3.980  & 4.785  & 4.178  & 4.467  & 5.152  & 5.216  & 4.548    \\
  &GRU& 4.365  & 2.865  & 2.833  & 5.295  & 3.842  & 3.996  & 5.139  & 3.662  & 3.789  & 6.002  & 4.723  & 5.358     \\
  &NODE& 3.992  & 3.291  & 2.482  & 4.674  & 4.333  & 3.254  & 4.390  & 4.135  & 2.808  & 5.734  & 5.388  & 4.036   \\
  &LG-ODE& 2.825  & 2.807  & 2.565  & 3.725  & 3.940  & 3.412  & 3.358  & 3.549  & 3.501  & 4.611  & 4.763  & 4.543   \\
  &MPNODE& 2.631  & 3.029  & 2.734  & 3.587  & 4.151  & 3.488  & 3.061  & 3.899  & 3.355  & 4.271  & 5.085  & 4.427   \\
  &SocialODE& 2.481 & 2.729 & 2.473  & 3.320  & 3.951  & 3.399  & 2.987  & 3.514  & 3.166  & 4.248  & 4.794  & 4.155       \\
  &HOPE& 2.326  & 2.572  & 2.442  & 3.495  & 3.816  & 3.413  & 2.581  & 3.528  & 2.955  & 4.548  & 5.047  & 4.007   \\
  & \cellcolor{pink}\method{} (Ours) &  \cellcolor{pink}\textbf{2.098}  &  \cellcolor{pink}\textbf{2.344}  &  \cellcolor{pink}\textbf{2.099}  &  \cellcolor{pink}\textbf{2.910}  &  \cellcolor{pink}\textbf{3.384}  &  \cellcolor{pink}\textbf{2.904}  &  \cellcolor{pink}\textbf{2.217}  &  \cellcolor{pink}\textbf{3.109}  &  \cellcolor{pink}\textbf{2.593}  &  \cellcolor{pink}\textbf{3.374}  &  \cellcolor{pink}\textbf{4.334}  &  \cellcolor{pink}\textbf{3.615}     \\  \midrule
  \multirow{8}{*}{\textit{2N5C}}&LSTM& 2.608  & 2.265  & 3.975  & 3.385  & 2.959  & 4.295  & 3.285  & 2.210  & 5.247  & 3.834  & 2.878  & 5.076     \\
  &GRU& 2.847  & 2.968  & 3.493  & 3.340  & 3.394  & 3.636  & 3.515  & 3.685  & 3.796  & 4.031  & 3.938  & 3.749    \\
  &NODE& 2.211  & 2.103  & 2.601  & 3.074  & 2.849  & 3.284  & 2.912  & 2.648  & 2.799  & 3.669  & 3.478  & 3.874   \\
  &LG-ODE& 2.176  & 1.884  & 1.928  & 2.824  & 2.413  & 2.689  & 2.647  & 2.284  & 2.326  & 3.659  & 3.120  & 3.403  \\
  &MPNODE& 1.855  & 1.923  & 2.235  & 2.836  & 2.805  & 3.416  & 2.305  & 2.552  & 2.373  & 3.244  & 3.537  & 3.220  \\
  &SocialODE& 1.965  & 1.717  & 1.817  & 2.575  & 2.286  & 2.412  & 2.348  & 2.138  & 2.169  & 3.380  & 2.990  & 3.057      \\
  &HOPE& 1.842  & 1.915  & 2.223  & 2.656  & 2.788  & 3.474  & 2.562  & 2.514  & 2.731  & 3.343  & 3.301  & 3.502  \\
  &\cellcolor{pink}\method{} (Ours) & \cellcolor{pink}\textbf{1.484}  & \cellcolor{pink}\textbf{1.424}  & \cellcolor{pink}\textbf{1.575}  & \cellcolor{pink}\textbf{1.960}  & \cellcolor{pink}\textbf{2.029}  & \cellcolor{pink}\textbf{2.119}  & \cellcolor{pink}\textbf{1.684}  & \cellcolor{pink}\textbf{1.809}  & \cellcolor{pink}\textbf{1.912}  & \cellcolor{pink}\textbf{2.464}  & \cellcolor{pink}\textbf{2.734}  & \cellcolor{pink}\textbf{2.727}   \\ \bottomrule
  \end{tabular}}
  \vspace{-0.2cm}
\end{table*}

\begin{figure*}
  \centering
  \includegraphics[width=0.95\textwidth]{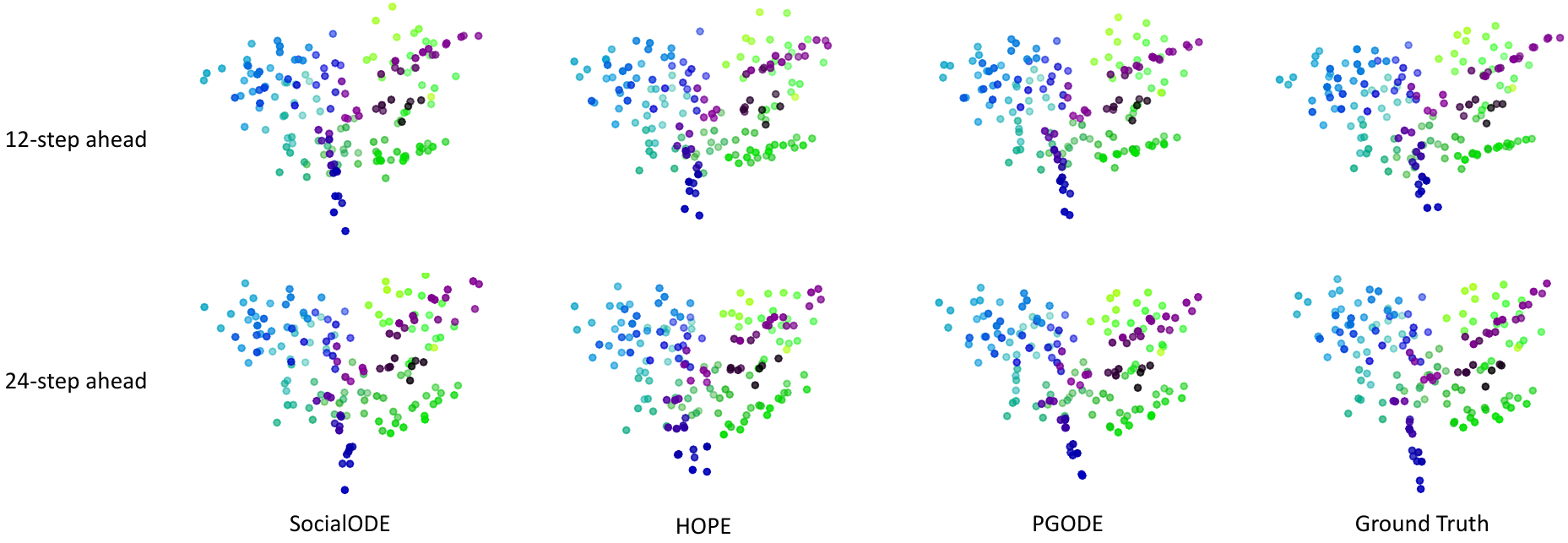}
  \vspace{-0.2cm}
  \caption{\RRRR{Visualization of prediction results of different methods on the \textit{5AWL} dataset. We can observe that our \method{} can reconstruct the ground truth accurately. }}
  \label{fig:vis_mole}
    \vspace{-0.4cm}
\end{figure*}

\subsection{Performance on Physical Dynamics Simulations}
\vspace{-0.2cm}
\textbf{Datasets.}
We employ two physics simulation datasets to evaluate our \method{}, i.e., \textit{Springs} and \textit{Charged}~\cite{kipf2018neural}.
Each sample in these two simulated datasets contains 10 particles in a 2D box that has potential collisions without exterior forces.
We aim to predict the future position information and the future velocity values of these interacting particles, i.e., $q$ and $v$.
More details of the two datasets can be found in Appendix \ref{dataset_detail}.

\textbf{Performance Comparison.} The compared results with respect to different prediction lengths are collected in Table \ref{mainresult:physical}. From the results, we have two observations. \textit{Firstly,} ODE-based methods generally outperform discrete methods, which validates that continuous methods can naturally capture system dynamics and relieve the influence of potential error accumulation. \textit{Secondly}, our proposed \method{} achieves the best performance among all the methods. In particular, the average MSE reduction of our \method{} over HOPE is 47.40\% for ID and 48.57\% for OOD settings on these two datasets. The superior performance stems from two reasons: (1) Introduction of context discovery. \method{} generates disentangled object-level and system-level embeddings, which would increase the generalization capability of the model to handle system changes, especially in OOD settings. (2) Introduction of prototype decomposition. \method{} combines a set of GNN prototypes to characterize the interacting patterns, which increases the expressivity of the model for complex dynamics. \RR{More compared results can be found in Sec. \ref{sup:perform}.}

\begin{table*}[t]
  \tabcolsep=6.2pt
  \centering
  \caption{Ablation study on \textit{Springs}, \textit{Charged} (MSE $\times 10^{-2}$) and \textit{5AWL} (MSE $\times 10^{-3}$) with a prediction length of 24. }\label{tab:ablation}
  \vspace{0.05cm}
  \resizebox{\textwidth}{!}{
  \begin{tabular}{l|cc|cc|cc|cc|ccc|ccc}
  \toprule
  Dataset & \multicolumn{2}{c|}{\textit{Springs} \footnotesize{(ID)}} & \multicolumn{2}{c|}{\textit{Springs} \footnotesize{(OOD)}} & \multicolumn{2}{c|}{\textit{Charged} \footnotesize{(ID)}} & \multicolumn{2}{c|}{\textit{Charged} \footnotesize{(OOD)}} & \multicolumn{3}{c|}{\textit{5AWL} \footnotesize{(ID)}} & \multicolumn{3}{c}{\textit{5AWL} \footnotesize{(OOD)}} \\ \midrule
  Variable & $q$ & $v$ & $q$ & $v$ & $q$ & $v$ & $q$ & $v$ & $q_x$ & $q_y$ & $q_z$ & $q_x$ & $q_y$ & $q_z$ \\ \midrule
  \method{} w/o O & 0.106 & 0.326 & 0.127 & 0.339 & 2.282 & 3.013 & 2.590 & 2.943  & 2.995 & 3.532 & 2.932 & 3.649 & 4.469 & 3.639  \\
  {\RRR{\method{} w/o $\epsilon$}} &  0.089 & 0.397 & 0.124 & 0.417  & 2.308 & 2.994 & 2.990 & 2.911  & 2.935 & 3.612 & 3.034 & 3.538 & 4.541 & 3.741   \\
  \method{} w/o F & 0.164 & 0.517 & 0.202 & 0.577 & 2.497 & 3.298 & 2.882 & 3.197   & 3.157 & 3.629 & 3.326 & 3.634 & 4.604 & 3.917   \\
  \method{} w/o D & 0.073 & 0.296 & 0.091 & 0.348 & 2.179 & 2.842 & 2.616 & 3.076    & 3.077 & 3.453 & 2.961 & 3.684 & 4.399 & 3.623    \\
  \rowcolor{pink} 
  \method{} & \textbf{0.070} & \textbf{0.262} & \textbf{0.088} & \textbf{0.291}  & \textbf{2.037} & \textbf{2.648} & \textbf{2.584} & \textbf{2.663} & \textbf{2.910} & \textbf{3.384} & \textbf{2.904} & \textbf{3.374} & \textbf{4.334} & \textbf{3.615}   \\ 
  \bottomrule
  \end{tabular}
  }
  \vspace{-0.1cm}
\end{table*}

\begin{figure*}
  \centering
  \includegraphics[width=\textwidth]{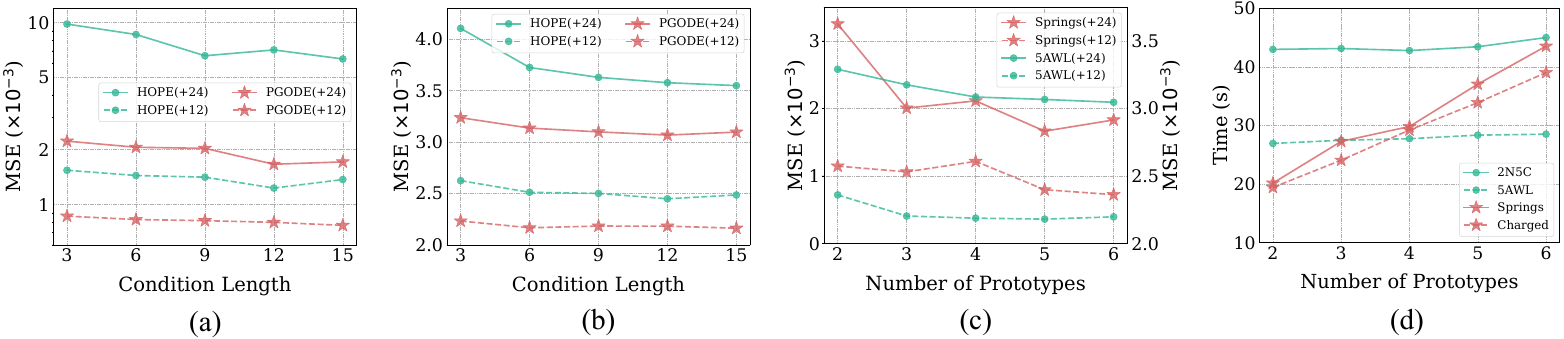}
  \vspace{-0.8cm}
  \caption{\RRRR{(a), (b) Performance with respect to varying condition lengths on \textit{Springs} and \textit{5AWL}. (c) (d) Performance and running time with respect to different numbers of prototypes.}}
  \label{fig:sensitivity}
    \vspace{-0.3cm}
\end{figure*}

\textbf{Visualization.} Figure \ref{fig:vis_phy} shows the visualization of three compared methods and the ground truth on \textit{Springs}. Here, semi-transparent paths denote the observed trajectories while solid paths denote the predicted ones. From the results, we can observe that our proposed \method{} can generate reliable trajectories close to the ground truth for all the timestamps while both baselines SocialODE and HOPE fail, which validates the superiority of the proposed \method{}.

  \vspace{-0.1cm}
\subsection{Performance on Molecular Dynamics Simulations}
  \vspace{-0.1cm}
\textbf{Datasets.} We construct two molecular dynamics datasets using two proteins, i.e., \textit{5AWL}, \textit{2N5C}, and our approach is evaluated on the two datasets.
Each sample in both datasets comprises a trajectory of molecular dynamics simulation, where the motions of each atom are governed by the Langevin dynamics equation in a specific solvent environment. {The graph is constructed by comparing pairwise distance with a threshold, which would be updated at set intervals.} The system parameters of the solvent are varied among different simulation samples.
We target at predicting the position of every atom in three coordinates, i.e., $q_x$, $q_y$ and $q_z$. \RR{More details can be found in Appendix \ref{dataset_detail}.}

\textbf{Performance Comparison.} We demonstrate the performance with respect to varying prediction lengths in Table \ref{mainresult:molecular}. Based on the results, it can be seen that our proposed \method{} can achieve the best performance on two datasets in both ID and OOD settings. Note that molecular dynamics involves hundreds of atoms with complicated interacting rules. As a consequence, the performance further demonstrates the strong expressivity of our proposed \method{} for modeling challenging underlying rules. 

\textbf{Visualization.} In addition, we provide the visualization of the two baselines and our \method{} compared with the ground truth with different prediction lengths in Figure \ref{fig:vis_mole}. We can observe that our \method{} is capable of exploring more accurate dynamical patterns compared with the ground truth. More importantly, our proposed \method{} can almost recover the position patterns when the prediction length is $24$, which validates the capability of the proposed \method{} to handle complicated scenarios.

  \vspace{-0.2cm}
\subsection{Further Analysis}
  \vspace{-0.05cm}

\textbf{Ablation Study.} To evaluate different components in \method{}, we introduce four model variants as follows: (1) \textit{\method{} w/o O}, which removes the object-level contexts and only utilizes system-level contexts for $\bm{w}_i$; (2) \textit{{\RRR{\method{} w/o $\epsilon$}}}, which removes the system-level contexts and only utilizes object-level contexts for $\bm{w}_i$; (3) \textit{\method{} w/o F}, which merely adopts one prototype for graph ODE. (4) \RRRR{\textit{\method{} w/o D}}, which remove the disentanglement loss. We compared these model variants with the full model in different settings. The results are recorded in Table \ref{tab:ablation} and more results on 2N5C can be found in Sec. \ref{sup_ablation}.
From the results, we can have several observations. \textit{Firstly}, removing either object-level or system-level contexts would obtain worse performance, which validates that both contexts are crucial to determining the interacting patterns. \textit{Secondly}, our full model achieves better performance compared with \textit{\method{} w/o F}, which validates that different prototypes can increase the representation capacity for modeling complicated dynamics. \textit{Thirdly}, in comparison to \textit{\method{} w/o D} and the full model, we can infer that representation disentanglement greatly enhances the performance under system changes.

\textbf{Parameter Sensitivity.} We first analyze the influence of different conditional lengths and prediction lengths by varying them in $\{3,6,9,12,15\}$ and $\{12,24\}$, respectively. As shown in Figure \ref{fig:sensitivity} (a) and (b), we can find that the error would decrease till saturation as the condition length rises since more historical information is provided. In addition, \method{} can always perform better than HOPE in every setting. Then, we vary the number of prototypes in $\{2,3,4,5,6\}$ in Figure \ref{fig:sensitivity} (c) and observe that more prototypes would bring in better results before saturation.

\textbf{Efficiency.} Although more prototypes tend to benefit the performance, they can also bring in high computational cost. We show the computational time with respect to different numbers of prototypes in Figure \ref{fig:sensitivity} (d) and observe that the computational complexity would increase with more prototypes. Due to the trade-off between effectiveness and efficiency, we would set the number to $5$ as default.

\section{Related Work}
\subsection{Interacting Dynamics Modeling} 

Recent years have witnessed a surge of interest in modeling interacting dynamical systems across a variety of fields including molecular biology and computational physics~\citep{lan2022dstagnn,li2022graph,bishnoi2022enhancing,sun2023unifying,yu2023learning,schaefer2021leveraging,abeyruwan2023sim2real,schlichtkrull2018modeling}. While convolutional neural networks (CNNs) have been successfully employed to learn from regular data such as grids and frames~\citep{peng2020unsteady}, emerging research is increasingly utilizing geometric graphs to represent more complex systems~\citep{wu2023learning,deng2023learning}. Graph neural networks (GNNs) have thus become increasingly prevailing for modeling these intricate dynamics. AgentFormer~\citep{yuan2021agentformer} jointly models both time and social dimensions with semantic information preserved. R-SSM~\citep{yang2020relational} models the dynamics of interacting objects using GNNs and includes auxiliary contrastive prediction tasks to enhance discriminative learning. \RRR{Equivariance is a crucial property in physical simulation to guarantee the symmetry of the physical laws and a range of previous works have been proposed~\cite{satorras2021n,wu2024equivariant}. For example, EqMotion~\cite{xu2023eqmotion} incorporates equivariant geometric feature learning for efficient multi-agent motion prediction. ESTAG~\cite{wu2024equivariant} includes the equivariant discrete Fourier transform to learn from periodic patterns.}
Despite their popularity, current methods often fall short in modeling challenging scenarios such as out-of-distribution shift and long-term dynamics~\citep{yu2021combo}. To address these limitations, our work leverages contextual knowledge to incorporate prototype decomposition into a graph ODE framework.

\subsection{Neural Ordinary Differential Equations} Motivated by the approximation of residual networks~\citep{chen2018neural}, neural ordinary differential equations (ODEs) have been introduced to model continuous-time dynamics using parameterized derivatives in hidden spaces. These neural ODEs have found widespread use in time-series forecasting due to their effectiveness~\citep{dupont2019augmented,xia2021heavy,jin2022multivariate,schirmer2022modeling}. Incorporated with the message passing mechanism, they have been integrated with GNNs, which can mitigate the issue of oversmoothing and enhance model interpretability~\citep{xhonneux2020continuous,zhang2022improving,poli2019graph}. I-GPODE~\citep{yildiz2022learning} estimates the uncertainty of trajectory predictions using the Gaussian process, which facilitates effective long-term predictions. HOPE~\citep{luo2023hope} incorporates second-order graph ODE in evolution modeling. In contrast, we not only introduce context discovery with disentanglement, which disentangles object-level and system-level embeddings with known system parameters, but also introduce prototypical graph ODE, which incorporates the object-level and system-level embeddings into prototypical graph ODE framework following the mixture-of-experts (MoE) principle.

\section{Conclusion}

In this work, we investigate a long-standing problem of modeling interacting dynamical systems and develop a novel approach named \method{}, which infers prototype decomposition from contextual discovery for a graph ODE framework. In particular, \method{} extracts disentangled object-level and system-level contexts from historical trajectories, which can enhance the capability of generalization under system changes. In addition, we present a graph ODE framework that determines a combination of multiple interacting prototypes for increased model expressivity. Extensive experiments demonstrate the superiority of our \method{} in different settings compared with various competing approaches.

\section*{Impact Statement}

This work introduces a data-driven framework for modeling interacting dynamical systems in different settings, which can be applied to facilitate research in physics and molecular biology. 
In addition, our work proposes new datasets and benchmarks on physical and molecular dynamics simulations in different settings, which can also benefit research in scientific machine learning. \RRRR{Our PGODE model can also be applied to traffic flow prediction and stock price prediction, where we can model the continuous interaction between different vehicles or stocks. In future work, we will extend \method{} to these practical problems and more scientific applications, e.g., rigid dynamics and single-cell dynamics.}

\section*{Acknowledgement}
This work was partially supported by NSF 2211557, NSF 1937599, NSF 2119643, NSF 2303037, NSF 2312501, NASA, SRC JUMP 2.0 Center, Amazon Research Awards, and Snapchat Gifts.

\bibliography{main.bib}

\begin{thebibliography}{106}
\providecommand{\natexlab}[1]{#1}
\providecommand{\url}[1]{\texttt{#1}}
\expandafter\ifx\csname urlstyle\endcsname\relax
  \providecommand{\doi}[1]{doi: #1}\else
  \providecommand{\doi}{doi: \begingroup \urlstyle{rm}\Url}\fi

\bibitem[Abeyruwan et~al.(2023)Abeyruwan, Graesser, D’Ambrosio, Singh, Shankar, Bewley, Jain, Choromanski, and Sanketi]{abeyruwan2023sim2real}
Abeyruwan, S.~W., Graesser, L., D’Ambrosio, D.~B., Singh, A., Shankar, A., Bewley, A., Jain, D., Choromanski, K.~M., and Sanketi, P.~R.
\newblock i-sim2real: Reinforcement learning of robotic policies in tight human-robot interaction loops.
\newblock In \emph{CoRL}, pp.\  212--224, 2023.

\bibitem[Bishnoi et~al.(2022)Bishnoi, Bhattoo, Ranu, and Krishnan]{bishnoi2022enhancing}
Bishnoi, S., Bhattoo, R., Ranu, S., and Krishnan, N.
\newblock Enhancing the inductive biases of graph neural ode for modeling dynamical systems.
\newblock \emph{arXiv preprint arXiv:2209.10740}, 2022.

\bibitem[Bussi \& Parrinello(2007)Bussi and Parrinello]{bussi2007accurate}
Bussi, G. and Parrinello, M.
\newblock Accurate sampling using langevin dynamics.
\newblock \emph{Physical Review E}, 75\penalty0 (5):\penalty0 056707, 2007.

\bibitem[Chen et~al.(2023{\natexlab{a}})Chen, Wu, Grinspun, Zheng, and Chen]{chen2023implicit}
Chen, H., Wu, R., Grinspun, E., Zheng, C., and Chen, P.~Y.
\newblock Implicit neural spatial representations for time-dependent pdes.
\newblock In \emph{ICML}, pp.\  5162--5177, 2023{\natexlab{a}}.

\bibitem[Chen et~al.(2022)Chen, Chen, Zheng, and Chen]{chen2022spatio}
Chen, J., Chen, Z., Zheng, L., and Chen, X.
\newblock A spatio-temporal data-driven automatic control method for smart home services.
\newblock In \emph{WWW}, pp.\  948--955, 2022.

\bibitem[Chen et~al.(2019)Chen, Esfandiari, Fu, and Mirrokni]{chen2019locality}
Chen, L., Esfandiari, H., Fu, G., and Mirrokni, V.
\newblock Locality-sensitive hashing for f-divergences: Mutual information loss and beyond.
\newblock In \emph{NeurIPS}, volume~32, 2019.

\bibitem[Chen et~al.(2024)Chen, Zhang, Cheng, Shu, Wang, Wen, Yang, and Guo]{chen2024multi}
Chen, P., Zhang, Y., Cheng, Y., Shu, Y., Wang, Y., Wen, Q., Yang, B., and Guo, C.
\newblock Multi-scale transformers with adaptive pathways for time series forecasting.
\newblock In \emph{ICLR}, 2024.

\bibitem[Chen et~al.(2018)Chen, Rubanova, Bettencourt, and Duvenaud]{chen2018neural}
Chen, R.~T., Rubanova, Y., Bettencourt, J., and Duvenaud, D.~K.
\newblock Neural ordinary differential equations.
\newblock In \emph{NeurIPS}, 2018.

\bibitem[Chen et~al.(2023{\natexlab{b}})Chen, Ren, Wang, Fang, Sun, and Li]{chen2023contiformer}
Chen, Y., Ren, K., Wang, Y., Fang, Y., Sun, W., and Li, D.
\newblock Contiformer: Continuous-time transformer for irregular time series modeling.
\newblock In \emph{NeurIPS}, 2023{\natexlab{b}}.

\bibitem[Chien et~al.(2021)Chien, Peng, Li, and Milenkovic]{chien2021adaptive}
Chien, E., Peng, J., Li, P., and Milenkovic, O.
\newblock Adaptive universal generalized pagerank graph neural network.
\newblock In \emph{ICLR}, 2021.

\bibitem[Cho et~al.(2014)]{cho2014learning}
Cho, K. et~al.
\newblock Learning phrase representations using rnn encoder--decoder for statistical machine translation.
\newblock In \emph{EMNLP}, pp.\  1724--1734, 2014.

\bibitem[Cini et~al.(2023)Cini, Marisca, Zambon, and Alippi]{cini2023taming}
Cini, A., Marisca, I., Zambon, D., and Alippi, C.
\newblock Taming local effects in graph-based spatiotemporal forecasting.
\newblock \emph{arXiv preprint arXiv:2302.04071}, 2023.

\bibitem[Coddington et~al.(1956)Coddington, Levinson, and Teichmann]{coddington1956theory}
Coddington, E.~A., Levinson, N., and Teichmann, T.
\newblock Theory of ordinary differential equations, 1956.

\bibitem[Dendorfer et~al.(2021)Dendorfer, Elflein, and Leal-Taix{\'e}]{dendorfer2021mg}
Dendorfer, P., Elflein, S., and Leal-Taix{\'e}, L.
\newblock Mg-gan: A multi-generator model preventing out-of-distribution samples in pedestrian trajectory prediction.
\newblock In \emph{ICCV}, pp.\  13158--13167, 2021.

\bibitem[Deng et~al.(2023)Deng, Yu, Wu, and Zhu]{deng2023learning}
Deng, Y., Yu, H.-X., Wu, J., and Zhu, B.
\newblock Learning vortex dynamics for fluid inference and prediction.
\newblock \emph{arXiv preprint arXiv:2301.11494}, 2023.

\bibitem[Du et~al.(2022)Du, Huang, Dai, Tong, Lepikhin, Xu, Krikun, Zhou, Yu, Firat, et~al.]{du2022glam}
Du, N., Huang, Y., Dai, A.~M., Tong, S., Lepikhin, D., Xu, Y., Krikun, M., Zhou, Y., Yu, A.~W., Firat, O., et~al.
\newblock Glam: Efficient scaling of language models with mixture-of-experts.
\newblock In \emph{ICML}, pp.\  5547--5569, 2022.

\bibitem[Dupont et~al.(2019)Dupont, Doucet, and Teh]{dupont2019augmented}
Dupont, E., Doucet, A., and Teh, Y.~W.
\newblock Augmented neural odes.
\newblock In \emph{NeurIPS}, 2019.

\bibitem[Feng et~al.(2023)Feng, Li, Zhang, and Zhou]{feng2023towards}
Feng, K., Li, C., Zhang, X., and Zhou, J.
\newblock Towards open temporal graph neural networks.
\newblock \emph{arXiv preprint arXiv:2303.15015}, 2023.

\bibitem[Fotiadis et~al.(2023)Fotiadis, Valencia, Hu, Garasto, Cantwell, and Bharath]{fotiadis2023disentangled}
Fotiadis, S., Valencia, M.~L., Hu, S., Garasto, S., Cantwell, C.~D., and Bharath, A.~A.
\newblock Disentangled generative models for robust prediction of system dynamics.
\newblock In \emph{ICML}, 2023.

\bibitem[Garc{\'\i}a-Palacios \& L{\'a}zaro(1998)Garc{\'\i}a-Palacios and L{\'a}zaro]{garcia1998langevin}
Garc{\'\i}a-Palacios, J.~L. and L{\'a}zaro, F.~J.
\newblock Langevin-dynamics study of the dynamical properties of small magnetic particles.
\newblock \emph{Physical Review B}, 58\penalty0 (22):\penalty0 14937, 1998.

\bibitem[Goyal \& Bengio(2022)Goyal and Bengio]{goyal2022inductive}
Goyal, A. and Bengio, Y.
\newblock Inductive biases for deep learning of higher-level cognition.
\newblock \emph{Proceedings of the Royal Society A}, 478\penalty0 (2266):\penalty0 20210068, 2022.

\bibitem[Gu et~al.(2022)Gu, Chen, Li, Lin, Rao, Zhou, and Lu]{gu2022stochastic}
Gu, T., Chen, G., Li, J., Lin, C., Rao, Y., Zhou, J., and Lu, J.
\newblock Stochastic trajectory prediction via motion indeterminacy diffusion.
\newblock In \emph{CVPR}, pp.\  17113--17122, 2022.

\bibitem[Guo et~al.(2023)Guo, Wang, Chen, Zhang, Sun, Lai, Zhang, and Chen]{guo2023newton}
Guo, L., Wang, W., Chen, Z., Zhang, N., Sun, Z., Lai, Y., Zhang, Q., and Chen, H.
\newblock Newton-cotes graph neural networks: On the time evolution of dynamic systems.
\newblock \emph{arXiv preprint arXiv:2305.14642}, 2023.

\bibitem[Gupta et~al.(2022)Gupta, Vemprala, and Kapoor]{gupta2022learning}
Gupta, J.~K., Vemprala, S., and Kapoor, A.
\newblock Learning modular simulations for homogeneous systems.
\newblock In \emph{NeurIPS}, 2022.

\bibitem[Hamilton et~al.(2017)Hamilton, Ying, and Leskovec]{hamilton2017inductive}
Hamilton, W., Ying, Z., and Leskovec, J.
\newblock Inductive representation learning on large graphs.
\newblock In \emph{NeurIPS}, 2017.

\bibitem[Han et~al.(2022)Han, Huang, Ma, Li, Tenenbaum, and Gan]{han2022learning}
Han, J., Huang, W., Ma, H., Li, J., Tenenbaum, J., and Gan, C.
\newblock Learning physical dynamics with subequivariant graph neural networks.
\newblock In \emph{NeurIPS}, pp.\  26256--26268, 2022.

\bibitem[He et~al.(2022)He, Wei, and Wen]{he2022convolutional}
He, M., Wei, Z., and Wen, J.-R.
\newblock Convolutional neural networks on graphs with chebyshev approximation, revisited.
\newblock In \emph{NeurIPS}, 2022.

\bibitem[Hochreiter \& Schmidhuber(1997)Hochreiter and Schmidhuber]{hochreiter1997long}
Hochreiter, S. and Schmidhuber, J.
\newblock Long short-term memory.
\newblock \emph{Neural Computation}, 9\penalty0 (8):\penalty0 1735--1780, 1997.

\bibitem[Huang et~al.(2023)Huang, Shi, Meng, Wang, Gao, Zhang, and Liu]{huang2023neuralstagger}
Huang, X., Shi, W., Meng, Q., Wang, Y., Gao, X., Zhang, J., and Liu, T.-Y.
\newblock Neuralstagger: accelerating physics-constrained neural pde solver with spatial-temporal decomposition.
\newblock In \emph{ICML}, 2023.

\bibitem[Huang et~al.(2020)Huang, Sun, and Wang]{huang2020learning}
Huang, Z., Sun, Y., and Wang, W.
\newblock Learning continuous system dynamics from irregularly-sampled partial observations.
\newblock In \emph{NeurIPS}, pp.\  16177--16187, 2020.

\bibitem[Huang et~al.(2021)Huang, Sun, and Wang]{huang2021coupled}
Huang, Z., Sun, Y., and Wang, W.
\newblock Coupled graph ode for learning interacting system dynamics.
\newblock In \emph{KDD}, 2021.

\bibitem[Jin et~al.(2022)Jin, Zheng, Li, Chen, Yang, and Pan]{jin2022multivariate}
Jin, M., Zheng, Y., Li, Y.-F., Chen, S., Yang, B., and Pan, S.
\newblock Multivariate time series forecasting with dynamic graph neural odes.
\newblock \emph{IEEE Transactions on Knowledge and Data Engineering}, 2022.

\bibitem[Ju et~al.(2024)Ju, Fang, Gu, Liu, Long, Qiao, Qin, Shen, Sun, Xiao, et~al.]{ju2024comprehensive}
Ju, W., Fang, Z., Gu, Y., Liu, Z., Long, Q., Qiao, Z., Qin, Y., Shen, J., Sun, F., Xiao, Z., et~al.
\newblock A comprehensive survey on deep graph representation learning.
\newblock \emph{Neural Networks}, pp.\  106207, 2024.

\bibitem[Kidger et~al.(2021)Kidger, Chen, and Lyons]{kidger2021hey}
Kidger, P., Chen, R. T.~Q., and Lyons, T.~J.
\newblock "hey, that's not an ode": Faster ode adjoints via seminorms.
\newblock \emph{ICML}, 2021.

\bibitem[Kim et~al.(2021)Kim, Kim, Tae, Park, Choi, and Choo]{kim2021reversible}
Kim, T., Kim, J., Tae, Y., Park, C., Choi, J.-H., and Choo, J.
\newblock Reversible instance normalization for accurate time-series forecasting against distribution shift.
\newblock In \emph{ICML}, 2021.

\bibitem[Kingma \& Ba(2015)Kingma and Ba]{DBLP:journals/corr/KingmaB14}
Kingma, D.~P. and Ba, J.
\newblock Adam: {A} method for stochastic optimization.
\newblock In Bengio, Y. and LeCun, Y. (eds.), \emph{3rd International Conference on Learning Representations, {ICLR} 2015, San Diego, CA, USA, May 7-9, 2015, Conference Track Proceedings}, 2015.

\bibitem[Kingma et~al.(2019)Kingma, Welling, et~al.]{kingma2019introduction}
Kingma, D.~P., Welling, M., et~al.
\newblock An introduction to variational autoencoders.
\newblock \emph{Foundations and Trends{\textregistered} in Machine Learning}, 12\penalty0 (4):\penalty0 307--392, 2019.

\bibitem[Kipf et~al.(2018)Kipf, Fetaya, Wang, Welling, and Zemel]{kipf2018neural}
Kipf, T., Fetaya, E., Wang, K.-C., Welling, M., and Zemel, R.
\newblock Neural relational inference for interacting systems.
\newblock In \emph{ICML}, pp.\  2688--2697, 2018.

\bibitem[Kipf \& Welling(2017)Kipf and Welling]{kipf2016semi}
Kipf, T.~N. and Welling, M.
\newblock Semi-supervised classification with graph convolutional networks.
\newblock In \emph{ICLR}, 2017.

\bibitem[Kofinas et~al.(2021)Kofinas, Nagaraja, and Gavves]{kofinas2021roto}
Kofinas, M., Nagaraja, N., and Gavves, E.
\newblock Roto-translated local coordinate frames for interacting dynamical systems.
\newblock In \emph{NeurIPS}, 2021.

\bibitem[Kong et~al.(2020)Kong, Sun, and Zhang]{kong2020sde}
Kong, L., Sun, J., and Zhang, C.
\newblock Sde-net: Equipping deep neural networks with uncertainty estimates.
\newblock In \emph{ICML}, 2020.

\bibitem[Lan et~al.(2022)Lan, Ma, Huang, Wang, Yang, and Li]{lan2022dstagnn}
Lan, S., Ma, Y., Huang, W., Wang, W., Yang, H., and Li, P.
\newblock Dstagnn: Dynamic spatial-temporal aware graph neural network for traffic flow forecasting.
\newblock In \emph{ICML}, pp.\  11906--11917, 2022.

\bibitem[Li et~al.(2022{\natexlab{a}})Li, Zhu, Cheng, Shan, Luo, Li, and Qian]{li2022finding}
Li, X., Zhu, R., Cheng, Y., Shan, C., Luo, S., Li, D., and Qian, W.
\newblock Finding global homophily in graph neural networks when meeting heterophily.
\newblock In \emph{ICML}, pp.\  13242--13256, 2022{\natexlab{a}}.

\bibitem[Li et~al.(2022{\natexlab{b}})Li, Meidani, Yadav, and Barati~Farimani]{li2022graph}
Li, Z., Meidani, K., Yadav, P., and Barati~Farimani, A.
\newblock Graph neural networks accelerated molecular dynamics.
\newblock \emph{The Journal of Chemical Physics}, 156\penalty0 (14), 2022{\natexlab{b}}.

\bibitem[Li et~al.(2023)Li, Cai, Fu, Hao, and Zhang]{li2023transferable}
Li, Z., Cai, R., Fu, T.~Z., Hao, Z., and Zhang, K.
\newblock Transferable time-series forecasting under causal conditional shift.
\newblock \emph{IEEE Transactions on Pattern Analysis and Machine Intelligence}, 2023.

\bibitem[Lienen et~al.(2024)Lienen, L{\"u}dke, Hansen-Palmus, and G{\"u}nnemann]{lienen2023generative}
Lienen, M., L{\"u}dke, D., Hansen-Palmus, J., and G{\"u}nnemann, S.
\newblock From zero to turbulence: Generative modeling for 3d flow simulation.
\newblock In \emph{ICLR}, 2024.

\bibitem[Lippe et~al.(2023)Lippe, Veeling, Perdikaris, Turner, and Brandstetter]{lippe2023pde}
Lippe, P., Veeling, B.~S., Perdikaris, P., Turner, R.~E., and Brandstetter, J.
\newblock Pde-refiner: Achieving accurate long rollouts with neural pde solvers.
\newblock In \emph{NeurIPS}, 2023.

\bibitem[Liu et~al.(2022)Liu, Zhan, Wu, Li, Du, Hu, Liu, and Tao]{liu2022graph}
Liu, C., Zhan, Y., Wu, J., Li, C., Du, B., Hu, W., Liu, T., and Tao, D.
\newblock Graph pooling for graph neural networks: Progress, challenges, and opportunities.
\newblock \emph{arXiv preprint arXiv:2204.07321}, 2022.

\bibitem[Liu et~al.(2023)Liu, Zhang, Tian, Zhang, Huang, Ye, and Zhang]{liu2023fair}
Liu, Z., Zhang, C., Tian, Y., Zhang, E., Huang, C., Ye, Y., and Zhang, C.
\newblock Fair graph representation learning via diverse mixture-of-experts.
\newblock In \emph{Proceedings of the ACM Web Conference 2023}, pp.\  28--38, 2023.

\bibitem[Luan et~al.(2022)Luan, Hua, Lu, Zhu, Zhao, Zhang, Chang, and Precup]{luan2022revisiting}
Luan, S., Hua, C., Lu, Q., Zhu, J., Zhao, M., Zhang, S., Chang, X.-W., and Precup, D.
\newblock Revisiting heterophily for graph neural networks.
\newblock In \emph{NeurIPS}, volume~35, pp.\  1362--1375, 2022.

\bibitem[Luo et~al.(2023)Luo, Yuan, Huang, Jiang, Qin, Ju, Zhang, and Sun]{luo2023hope}
Luo, X., Yuan, J., Huang, Z., Jiang, H., Qin, Y., Ju, W., Zhang, M., and Sun, Y.
\newblock Hope: High-order graph ode for modeling interacting dynamics.
\newblock 2023.

\bibitem[Mayr et~al.(2023)Mayr, Lehner, Mayrhofer, Kloss, Hochreiter, and Brandstetter]{mayr2023boundary}
Mayr, A., Lehner, S., Mayrhofer, A., Kloss, C., Hochreiter, S., and Brandstetter, J.
\newblock Boundary graph neural networks for 3d simulations.
\newblock In \emph{AAAI}, volume~37, pp.\  9099--9107, 2023.

\bibitem[Meirom et~al.(2021)Meirom, Maron, Mannor, and Chechik]{meirom2021controlling}
Meirom, E., Maron, H., Mannor, S., and Chechik, G.
\newblock Controlling graph dynamics with reinforcement learning and graph neural networks.
\newblock In \emph{ICML}, pp.\  7565--7577, 2021.

\bibitem[Mirza et~al.(2022)Mirza, Masana, Possegger, and Bischof]{mirza2022efficient}
Mirza, M.~J., Masana, M., Possegger, H., and Bischof, H.
\newblock An efficient domain-incremental learning approach to drive in all weather conditions.
\newblock In \emph{CVPR}, pp.\  3001--3011, 2022.

\bibitem[Niu et~al.(2020)Niu, Mathur, Dinu, and Al-Onaizan]{niu2020evaluating}
Niu, X., Mathur, P., Dinu, G., and Al-Onaizan, Y.
\newblock Evaluating robustness to input perturbations for neural machine translation.
\newblock \emph{arXiv preprint arXiv:2005.00580}, 2020.

\bibitem[Niu et~al.(2021)Niu, Zhong, and Yu]{niu2021review}
Niu, Z., Zhong, G., and Yu, H.
\newblock A review on the attention mechanism of deep learning.
\newblock \emph{Neurocomputing}, 452:\penalty0 48--62, 2021.

\bibitem[Paszke et~al.(2017)Paszke, Gross, Chintala, Chanan, Yang, DeVito, Lin, Desmaison, Antiga, and Lerer]{paszke2017automatic}
Paszke, A., Gross, S., Chintala, S., Chanan, G., Yang, E., DeVito, Z., Lin, Z., Desmaison, A., Antiga, L., and Lerer, A.
\newblock Automatic differentiation in pytorch.
\newblock 2017.

\bibitem[Peng et~al.(2020)Peng, Chen, Aubry, Chen, and Wu]{peng2020unsteady}
Peng, J.-Z., Chen, S., Aubry, N., Chen, Z., and Wu, W.-T.
\newblock Unsteady reduced-order model of flow over cylinders based on convolutional and deconvolutional neural network structure.
\newblock \emph{Physics of Fluids}, 32\penalty0 (12):\penalty0 123609, 2020.

\bibitem[Pfaff et~al.(2021)Pfaff, Fortunato, Sanchez-Gonzalez, and Battaglia]{pfaff2020learning}
Pfaff, T., Fortunato, M., Sanchez-Gonzalez, A., and Battaglia, P.~W.
\newblock Learning mesh-based simulation with graph networks.
\newblock In \emph{ICLR}, 2021.

\bibitem[Poli et~al.(2019)Poli, Massaroli, Park, Yamashita, Asama, and Park]{poli2019graph}
Poli, M., Massaroli, S., Park, J., Yamashita, A., Asama, H., and Park, J.
\newblock Graph neural ordinary differential equations.
\newblock \emph{arXiv preprint arXiv:1911.07532}, 2019.

\bibitem[Ragab et~al.(2023)Ragab, Eldele, Tan, Foo, Chen, Wu, Kwoh, and Li]{ragab2023adatime}
Ragab, M., Eldele, E., Tan, W.~L., Foo, C.-S., Chen, Z., Wu, M., Kwoh, C.-K., and Li, X.
\newblock Adatime: A benchmarking suite for domain adaptation on time series data.
\newblock \emph{ACM Transactions on Knowledge Discovery from Data}, 17\penalty0 (8):\penalty0 1--18, 2023.

\bibitem[R{\"a}m{\"a} \& Sipil{\"a}(2017)R{\"a}m{\"a} and Sipil{\"a}]{rama2017transition}
R{\"a}m{\"a}, M. and Sipil{\"a}, K.
\newblock Transition to low temperature distribution in existing systems.
\newblock \emph{Energy Procedia}, 116:\penalty0 58--68, 2017.

\bibitem[Rao et~al.(2023)Rao, Ren, Wang, Buyukozturk, Sun, and Liu]{rao2023encoding}
Rao, C., Ren, P., Wang, Q., Buyukozturk, O., Sun, H., and Liu, Y.
\newblock Encoding physics to learn reaction--diffusion processes.
\newblock \emph{Nature Machine Intelligence}, pp.\  1--15, 2023.

\bibitem[Rhodes \& Lee(2021)Rhodes and Lee]{rhodes2021local}
Rhodes, T. and Lee, D.
\newblock Local disentanglement in variational auto-encoders using jacobian $ l\_1 $ regularization.
\newblock \emph{Advances in Neural Information Processing Systems}, 34:\penalty0 22708--22719, 2021.

\bibitem[Sanchez-Gonzalez et~al.(2020)Sanchez-Gonzalez, Godwin, Pfaff, Ying, Leskovec, and Battaglia]{sanchez2020learning}
Sanchez-Gonzalez, A., Godwin, J., Pfaff, T., Ying, R., Leskovec, J., and Battaglia, P.
\newblock Learning to simulate complex physics with graph networks.
\newblock In \emph{ICML}, pp.\  8459--8468. PMLR, 2020.

\bibitem[Satorras et~al.(2021)Satorras, Hoogeboom, and Welling]{satorras2021n}
Satorras, V.~G., Hoogeboom, E., and Welling, M.
\newblock E (n) equivariant graph neural networks.
\newblock In \emph{International conference on machine learning}, pp.\  9323--9332. PMLR, 2021.

\bibitem[Schaefer et~al.(2021)Schaefer, Leung, Ivanovic, and Pavone]{schaefer2021leveraging}
Schaefer, S., Leung, K., Ivanovic, B., and Pavone, M.
\newblock Leveraging neural network gradients within trajectory optimization for proactive human-robot interactions.
\newblock In \emph{ICRA}, pp.\  9673--9679, 2021.

\bibitem[Schirmer et~al.(2022)Schirmer, Eltayeb, Lessmann, and Rudolph]{schirmer2022modeling}
Schirmer, M., Eltayeb, M., Lessmann, S., and Rudolph, M.
\newblock Modeling irregular time series with continuous recurrent units.
\newblock In \emph{ICML}, pp.\  19388--19405, 2022.

\bibitem[Schlichtkrull et~al.(2018)Schlichtkrull, Kipf, Bloem, Van Den~Berg, Titov, and Welling]{schlichtkrull2018modeling}
Schlichtkrull, M., Kipf, T.~N., Bloem, P., Van Den~Berg, R., Titov, I., and Welling, M.
\newblock Modeling relational data with graph convolutional networks.
\newblock In \emph{The semantic web: 15th international conference, ESWC 2018, Heraklion, Crete, Greece, June 3--7, 2018, proceedings 15}, pp.\  593--607. Springer, 2018.

\bibitem[Schober et~al.(2019)Schober, S{\"a}rkk{\"a}, and Hennig]{schober2019probabilistic}
Schober, M., S{\"a}rkk{\"a}, S., and Hennig, P.
\newblock A probabilistic model for the numerical solution of initial value problems.
\newblock \emph{Statistics and Computing}, 29\penalty0 (1):\penalty0 99--122, 2019.

\bibitem[Shao et~al.(2022)Shao, Loy, and Dai]{shao2022transformer}
Shao, Y., Loy, C.~C., and Dai, B.
\newblock Transformer with implicit edges for particle-based physics simulation.
\newblock In \emph{ECCV}, pp.\  549--564, 2022.

\bibitem[Steeven et~al.(2024)Steeven, Madiha, Julie, and Christian]{steeven2024space}
Steeven, J., Madiha, N., Julie, D., and Christian, W.
\newblock Space and time continuous physics simulation from partial observations.
\newblock In \emph{ICLR}, 2024.

\bibitem[Sun et~al.(2019)Sun, Hoffmann, Verma, and Tang]{sun2019infograph}
Sun, F.-Y., Hoffmann, J., Verma, V., and Tang, J.
\newblock Infograph: Unsupervised and semi-supervised graph-level representation learning via mutual information maximization.
\newblock \emph{arXiv preprint arXiv:1908.01000}, 2019.

\bibitem[Sun et~al.(2023)Sun, Han, Gao, Wang, and Liu]{sun2023unifying}
Sun, L., Han, X., Gao, H., Wang, J.-X., and Liu, L.
\newblock Unifying predictions of deterministic and stochastic physics in mesh-reduced space with sequential flow generative model.
\newblock In \emph{NeurIPS}, 2023.

\bibitem[Suresh et~al.(2021)Suresh, Budde, Neville, Li, and Ma]{suresh2021breaking}
Suresh, S., Budde, V., Neville, J., Li, P., and Ma, J.
\newblock Breaking the limit of graph neural networks by improving the assortativity of graphs with local mixing patterns.
\newblock \emph{arXiv preprint arXiv:2106.06586}, 2021.

\bibitem[Veli{\v{c}}kovi{\'c} et~al.(2018)Veli{\v{c}}kovi{\'c}, Cucurull, Casanova, Romero, Lio, and Bengio]{velivckovic2018graph}
Veli{\v{c}}kovi{\'c}, P., Cucurull, G., Casanova, A., Romero, A., Lio, P., and Bengio, Y.
\newblock Graph attention networks.
\newblock In \emph{ICLR}, 2018.

\bibitem[Wang et~al.(2024)Wang, Jiang, You, Han, Liu, Srinivasa, Kompella, Wang, et~al.]{wang2024graph}
Wang, H., Jiang, Z., You, Y., Han, Y., Liu, G., Srinivasa, J., Kompella, R., Wang, Z., et~al.
\newblock Graph mixture of experts: Learning on large-scale graphs with explicit diversity modeling.
\newblock \emph{Advances in Neural Information Processing Systems}, 36, 2024.

\bibitem[Wang \& Van~Hoof(2022)Wang and Van~Hoof]{wang2022learning}
Wang, Q. and Van~Hoof, H.
\newblock Learning expressive meta-representations with mixture of expert neural processes.
\newblock \emph{Advances in neural information processing systems}, 35:\penalty0 26242--26255, 2022.

\bibitem[Wang et~al.(2020)Wang, Yu, Dunlap, Ma, Wang, Mirhoseini, Darrell, and Gonzalez]{wang2020deep}
Wang, X., Yu, F., Dunlap, L., Ma, Y.-A., Wang, R., Mirhoseini, A., Darrell, T., and Gonzalez, J.~E.
\newblock Deep mixture of experts via shallow embedding.
\newblock In \emph{Uncertainty in artificial intelligence}, pp.\  552--562. PMLR, 2020.

\bibitem[Weerakody et~al.(2021)Weerakody, Wong, Wang, and Ela]{weerakody2021review}
Weerakody, P.~B., Wong, K.~W., Wang, G., and Ela, W.
\newblock A review of irregular time series data handling with gated recurrent neural networks.
\newblock \emph{Neurocomputing}, 441:\penalty0 161--178, 2021.

\bibitem[Wen et~al.(2022)Wen, Wang, and Metaxas]{wen2022social}
Wen, S., Wang, H., and Metaxas, D.
\newblock Social ode: Multi-agent trajectory forecasting with neural ordinary differential equations.
\newblock In \emph{ECCV}, pp.\  217--233. Springer, 2022.

\bibitem[Wu et~al.(2024)Wu, Hou, Yuan, Rong, and Huang]{wu2024equivariant}
Wu, L., Hou, Z., Yuan, J., Rong, Y., and Huang, W.
\newblock Equivariant spatio-temporal attentive graph networks to simulate physical dynamics.
\newblock \emph{Advances in Neural Information Processing Systems}, 36, 2024.

\bibitem[Wu et~al.(2023)Wu, Maruyama, Zhao, Wetzstein, and Leskovec]{wu2023learning}
Wu, T., Maruyama, T., Zhao, Q., Wetzstein, G., and Leskovec, J.
\newblock Learning controllable adaptive simulation for multi-resolution physics.
\newblock In \emph{ICLR}, 2023.

\bibitem[Xhonneux et~al.(2020)Xhonneux, Qu, and Tang]{xhonneux2020continuous}
Xhonneux, L.-P., Qu, M., and Tang, J.
\newblock Continuous graph neural networks.
\newblock In \emph{ICML}, pp.\  10432--10441, 2020.

\bibitem[Xia et~al.(2021)Xia, Suliafu, Ji, Nguyen, Bertozzi, Osher, and Wang]{xia2021heavy}
Xia, H., Suliafu, V., Ji, H., Nguyen, T., Bertozzi, A., Osher, S., and Wang, B.
\newblock Heavy ball neural ordinary differential equations.
\newblock In \emph{NeurIPS}, pp.\  18646--18659, 2021.

\bibitem[Xu et~al.(2023)Xu, Tan, Tan, Chen, Wang, Wang, and Wang]{xu2023eqmotion}
Xu, C., Tan, R.~T., Tan, Y., Chen, S., Wang, Y.~G., Wang, X., and Wang, Y.
\newblock Eqmotion: Equivariant multi-agent motion prediction with invariant interaction reasoning.
\newblock In \emph{Proceedings of the IEEE/CVF Conference on Computer Vision and Pattern Recognition}, pp.\  1410--1420, 2023.

\bibitem[Xu et~al.(2019{\natexlab{a}})Xu, Hu, Leskovec, and Jegelka]{xu2019powerful}
Xu, K., Hu, W., Leskovec, J., and Jegelka, S.
\newblock How powerful are graph neural networks?
\newblock In \emph{ICLR}, 2019{\natexlab{a}}.

\bibitem[Xu et~al.(2020)Xu, Shi, Zhang, Wang, Chang, Huang, Kailkhura, Lin, and Hsieh]{xu2020automatic}
Xu, K., Shi, Z., Zhang, H., Wang, Y., Chang, K.-W., Huang, M., Kailkhura, B., Lin, X., and Hsieh, C.-J.
\newblock Automatic perturbation analysis for scalable certified robustness and beyond.
\newblock \emph{Advances in Neural Information Processing Systems}, 33:\penalty0 1129--1141, 2020.

\bibitem[Xu et~al.(2019{\natexlab{b}})Xu, Li, Zhu, and Zhang]{xu2019multi}
Xu, T., Li, C., Zhu, J., and Zhang, B.
\newblock Multi-objects generation with amortized structural regularization.
\newblock \emph{Advances in Neural Information Processing Systems}, 32, 2019{\natexlab{b}}.

\bibitem[Yan et~al.(2022)Yan, Hashemi, Swersky, Yang, and Koutra]{yan2022two}
Yan, Y., Hashemi, M., Swersky, K., Yang, Y., and Koutra, D.
\newblock Two sides of the same coin: Heterophily and oversmoothing in graph convolutional neural networks.
\newblock In \emph{ICDM}, pp.\  1287--1292, 2022.

\bibitem[Yang et~al.(2020)Yang, Chen, Zhou, Gao, and Cao]{yang2020relational}
Yang, F., Chen, L., Zhou, F., Gao, Y., and Cao, W.
\newblock Relational state-space model for stochastic multi-object systems.
\newblock \emph{arXiv preprint arXiv:2001.04050}, 2020.

\bibitem[Yang et~al.(2021)Yang, Hu, Shi, Ji, Li, and Nie]{yang2021hgat}
Yang, T., Hu, L., Shi, C., Ji, H., Li, X., and Nie, L.
\newblock Hgat: Heterogeneous graph attention networks for semi-supervised short text classification.
\newblock \emph{ACM Transactions on Information Systems (TOIS)}, 39\penalty0 (3):\penalty0 1--29, 2021.

\bibitem[Yildiz et~al.(2022)Yildiz, Kandemir, and Rakitsch]{yildiz2022learning}
Yildiz, C., Kandemir, M., and Rakitsch, B.
\newblock Learning interacting dynamical systems with latent gaussian process odes.
\newblock In \emph{NeurIPS}, 2022.

\bibitem[Yu et~al.(2020)Yu, Ma, Ren, Zhao, and Yi]{yu2020spatio}
Yu, C., Ma, X., Ren, J., Zhao, H., and Yi, S.
\newblock Spatio-temporal graph transformer networks for pedestrian trajectory prediction.
\newblock In \emph{ECCV}, pp.\  507--523, 2020.

\bibitem[Yu et~al.(2021)Yu, Kumar, Rafailov, Rajeswaran, Levine, and Finn]{yu2021combo}
Yu, T., Kumar, A., Rafailov, R., Rajeswaran, A., Levine, S., and Finn, C.
\newblock Combo: Conservative offline model-based policy optimization.
\newblock In \emph{NeurIPS}, pp.\  28954--28967, 2021.

\bibitem[Yu et~al.(2024)Yu, Choi, Cho, Lee, Kim, Chang, Woo, Kim, Lee, Yang, et~al.]{yu2023learning}
Yu, Y.-Y., Choi, J., Cho, W., Lee, K., Kim, N., Chang, K., Woo, C., Kim, I., Lee, S., Yang, J.~Y., et~al.
\newblock Learning flexible body collision dynamics with hierarchical contact mesh transformer.
\newblock In \emph{ICLR}, 2024.

\bibitem[Yuan et~al.(2021)Yuan, Weng, Ou, and Kitani]{yuan2021agentformer}
Yuan, Y., Weng, X., Ou, Y., and Kitani, K.~M.
\newblock Agentformer: Agent-aware transformers for socio-temporal multi-agent forecasting.
\newblock In \emph{Proceedings of the IEEE/CVF International Conference on Computer Vision}, pp.\  9813--9823, 2021.

\bibitem[Zhang et~al.(2022)Zhang, Gao, Pei, and Huang]{zhang2022improving}
Zhang, Y., Gao, S., Pei, J., and Huang, H.
\newblock Improving social network embedding via new second-order continuous graph neural networks.
\newblock In \emph{KDD}, pp.\  2515--2523, 2022.

\bibitem[Zhao et~al.(2020)Zhao, Dai, Xu, and Ren]{zhao2020protoviewer}
Zhao, J., Dai, Z., Xu, P., and Ren, L.
\newblock Protoviewer: Visual interpretation and diagnostics of deep neural networks with factorized prototypes.
\newblock In \emph{VIS}, pp.\  286--290, 2020.

\bibitem[Zheng et~al.(2022)Zheng, Liu, Pan, Zhang, Jin, and Yu]{zheng2022graph}
Zheng, X., Liu, Y., Pan, S., Zhang, M., Jin, D., and Yu, P.~S.
\newblock Graph neural networks for graphs with heterophily: A survey.
\newblock \emph{arXiv preprint arXiv:2202.07082}, 2022.

\bibitem[Zhou et~al.(2021)Zhou, Zhang, Peng, Zhang, Li, Xiong, and Zhang]{zhou2021informer}
Zhou, H., Zhang, S., Peng, J., Zhang, S., Li, J., Xiong, H., and Zhang, W.
\newblock Informer: Beyond efficient transformer for long sequence time-series forecasting.
\newblock In \emph{AAAI}, volume~35, pp.\  11106--11115, 2021.

\bibitem[Zhou et~al.(2022)Zhou, Lei, Liu, Du, Huang, Zhao, Dai, Le, Laudon, et~al.]{zhou2022mixture}
Zhou, Y., Lei, T., Liu, H., Du, N., Huang, Y., Zhao, V., Dai, A.~M., Le, Q.~V., Laudon, J., et~al.
\newblock Mixture-of-experts with expert choice routing.
\newblock In \emph{NeurIPS}, volume~35, pp.\  7103--7114, 2022.

\bibitem[Zhu et~al.(2020)Zhu, Yan, Zhao, Heimann, Akoglu, and Koutra]{zhu2020beyond}
Zhu, J., Yan, Y., Zhao, L., Heimann, M., Akoglu, L., and Koutra, D.
\newblock Beyond homophily in graph neural networks: Current limitations and effective designs.
\newblock In \emph{NeurIPS}, pp.\  7793--7804, 2020.

\bibitem[Zhu et~al.(2021)Zhu, Rossi, Rao, Mai, Lipka, Ahmed, and Koutra]{zhu2021graph}
Zhu, J., Rossi, R.~A., Rao, A., Mai, T., Lipka, N., Ahmed, N.~K., and Koutra, D.
\newblock Graph neural networks with heterophily.
\newblock In \emph{AAAI}, volume~35, pp.\  11168--11176, 2021.

\bibitem[Zhu et~al.(2023)Zhu, Luan, and Shen]{zhu2023biff}
Zhu, Y., Luan, D., and Shen, S.
\newblock Biff: Bi-level future fusion with polyline-based coordinate for interactive trajectory prediction.
\newblock In \emph{ICCV}, 2023.

\bibitem[Zhuang et~al.(2021)Zhuang, Dvornek, Tatikonda, and Duncan]{zhuang2021mali}
Zhuang, J., Dvornek, N.~C., Tatikonda, S., and Duncan, J.~S.
\newblock Mali: A memory efficient and reverse accurate integrator for neural odes.
\newblock \emph{arXiv preprint arXiv:2102.04668}, 2021.

\end{thebibliography}
\bibliographystyle{icml2024}

\newpage
\appendix
\onecolumn

\section{Algorithm}\label{alg:sec}

We summarize the learning algorithm of our \method{} in Algorithm \ref{alg}. 
\begin{algorithm}[!h]
\caption{Training Algorithm of \method{}}
\label{alg}
\begin{flushleft}
\textbf{Input:} The observations $G^{1:T} = \{G^1, \cdots, G^T\}$.   \\
\textbf{Output}: The parameters in the model.  \\
\end{flushleft}
\begin{algorithmic}[1] 
\STATE Initialize model parameters;
\WHILE{not convergence}
\FOR{each training sequence}
\STATE Partition the sequence into two parts;
\STATE Construct the temporal graph with Eqn. \ref{eq:temporal_graph};
\STATE Generate object-level contexts using Eqn. \ref{eq:local}; 
\STATE Generate system-level contexts with summarization; 
\STATE Solve our \RR{prototypical graph ODE} in Eqn. \ref{eq:ODE};
\STATE Output the trajectories using the decoder;
\STATE Compute the final objective, i.e., Eqn. \ref{eq:final_loss};
\STATE \RR{Update $\tau'$ in our \method{} with gradient ascent;}
\STATE \RR{Update other parameters in our \method{} using gradient descent;}
\ENDFOR
\ENDWHILE
\end{algorithmic}
\end{algorithm}

\section{Proof of Theorem \ref{thm1}}\label{proof_thm1}

\textbf{Theorem 3.1.} \textit{Assume the prototype function \( \psi_a^k \) has a bounded gradient. Moreover, each prototype function \( \psi_a^k\) and \( \psi_r^k \) are Lipschitz continuous with Lipschitz constant \( L^k_{a} \) and \( L^k_r \), and \( \psi_a\) and \( \psi_r \) are for single prototype function with Lipschitz constant \( L_a  \) and \(L_r\). For the sake of simplicity, we omit the last term $-\bm{z}_i^t$ in Eqn. \ref{eq:ODE} and Eqn. \(\ref{thm1-single}\) since it can be incorporated in the revised GNN prototypes. Denote $L^k= L^k_{a}  L^k_{r} $ and $L=L_aL_r$, if $\mathbb{E}(L^k)<\mathbb{E}(L)$, $\operatorname{Var}(L^k)<\operatorname{Var}(L)$ hold for all $k=1,\ldots,K$, our multi-prototype system described in Eqn. \(\ref{eq:ODE}\) will have smaller mean and variance bounds for the Lyapunov error function \({\|\bm{e}^t\|^2}/{2}\) compared to the single-prototype system described in Eqn. \(\ref{thm1-single}\).}

\begin{proof}
To show that the multi-prototype system Eqn. \ref{eq:ODE} is better in terms of robustness compared to the single-prototype system Eqn. \ref{thm1-single}, we consider a perturbation \(\bm\delta\) of small magnitude \(\epsilon\), with \(\|\bm\delta\| = \epsilon\), applied to an input point \(\bm{Z}^0\). For the multi-prototype system, we assume the perturbed solution of the ODE at time \(t\) is \(\bm{\tilde{Z}}^t\). After omitting the last term in Eqn. \ref{eq:ODE}, the difference between the perturbed and unperturbed states is:
\[
\left[\frac{d(\bm{\tilde{Z}}^t - \bm{Z}^t)}{dt}\right]_{i} = \sum_{k=1}^{K} w_i^k \left( \psi_a^k \left( \sum_{j \in \mathcal{S}(i^t)} \psi_r^k ([\tilde{Z}_i^t, \tilde{Z}_j^t]) \right) - \psi_a^k \left( \sum_{j \in \mathcal{S}(i^t)} \psi_r^k ([Z_i^t, Z_j^t]) \right) \right) ,
\]
where $[\bm{a}]_{i}$ denotes the $i$th element of the vector $\bm{a}$.
For the single-prototype system Eqn. \ref{thm1-single},  the difference between the perturbed and unperturbed states is:
\[
\left[\frac{d(\bm{\tilde{Z}}^t - \bm{Z}^t)}{dt}\right]_{i} = \psi_a^1 \left( \sum_{j \in \mathcal{S}(i^t)} \psi_r ([\tilde{Z}_i^t, \tilde{Z}_j^t]) \right) - \psi_a^1 \left( \sum_{j \in \mathcal{S}(i^t)} \psi_r ([Z_i^t, Z_j^t]) \right) .
\]
Consider the error \(\bm{e}^{t} = \bm{\tilde{Z}}^t - \bm{Z}^t\) and analyze its growth over time. For the multi-prototype system, the combined effect of multiple prototypes with weights \(w_i^k\) tends to average out the perturbation, potentially leading to a slower growth of \(\|\bm{e}^t\|\). This can be quantified by:
\begin{equation}\label{eq:err-p-m}
\left[\frac{d\bm{e}^t}{dt}\right]_i = \sum_{k=1}^{K} w_i^k \left( \psi_a^k \left( \sum_{j \in \mathcal{S}(i^t)} \psi_r^k ([\tilde{Z}_i^t, \tilde{Z}_j^t]) \right) - \psi_a^k \left( \sum_{j \in \mathcal{S}(i^t)} \psi_r^k ([Z_i^t, Z_j^t]) \right) \right) .
\end{equation}
For the single-prototype system, the error propagation is:
\begin{equation}\label{eq:err-p-s}
\left[\frac{d\bm{e}^t}{dt}\right]_i = \psi_a^1 \left( \sum_{j \in \mathcal{S}(i^t)} \psi_r ([\tilde{Z}_i^t, \tilde{Z}_j^t]) \right) - \psi_a^1 \left( \sum_{j \in \mathcal{S}(i^t)} \psi_r ([Z_i^t, Z_j^t]) \right) .
\end{equation}
We use Lyapunov functions to quantify the stability. A Lyapunov function \(V(x)\) is a scalar function that maps the state of the system to a non-negative real number. In this case, we define a Lyapunov function as:
\[
V(\bm{e}^t) = \frac{1}{2} \|\bm{e}^t\|^2.
\]
The time derivative of \(V\) is then given by:
\[
\frac{dV}{dt} = \frac{d}{dt} \left( \frac{1}{2} \|\bm{e}^t\|^2 \right) = (\bm{e}^t)^{\top} \cdot \frac{d\bm{e}^t}{dt}.
\]
Plugging in Eqn. \ref{eq:err-p-m} and Eqn. \ref{eq:err-p-s} into the above derivative, we get:
\begin{equation}\label{eq:Lyp-m}
\frac{dV}{dt} =\sum_{i=1}^{N} [\bm{e}^t]_{i} \cdot \left[\frac{d\bm{e}^t}{dt}\right]_{i} =\sum_{i=1}^{N} [\bm{e}^t]_{i} \cdot \left( \sum_{k=1}^{K} w_i^k \left( \psi_a^k \left( \sum_{j \in \mathcal{S}(i^t)} \psi_r^k ([\tilde{Z}_i^t, \tilde{Z}_j^t]) \right) - \psi_a^k \left( \sum_{j \in \mathcal{S}(i^t)} \psi_r^k ([Z_i^t, Z_j^t]) \right) \right) \right)
\end{equation}
and
\begin{equation}\label{eq:Lyp-s}
\frac{dV}{dt} = \sum_{i=1}^{N}[\bm{e}^t]_{i} \cdot \left[\frac{d\bm{e}^t}{dt}\right]_{i} =\sum_{i=1}^{N} [\bm{e}^t]_{i}\cdot \left( \psi_a^1 \left( \sum_{j \in \mathcal{S}(i^t)} \psi_r ([\tilde{Z}_i^t, \tilde{Z}_j^t]) \right) - \psi_a^1 \left( \sum_{j \in \mathcal{S}(i^t)} \psi_r ([Z_i^t, Z_j^t]) \right) \right).
\end{equation}
Assume \(\psi_a\) and \(\psi_r\) are Lipschitz continuous with Lipschitz constants \(L_a\) and \(L_r\), respectively. Eqn. \ref{eq:Lyp-m} and Eqn. \ref{eq:Lyp-s} can be approximated by:

\begin{align*}
	&\left|\frac{dV}{dt}\right| \leq \sum_{i=1}^{N} |[\bm{e}^t]_{i}|\cdot\left( \sum_{k=1}^{K} w_i^k L_a^k L_r^k\|\bm{e}^t \| \right)  \leq \|\bm{e}^t\|^2\sqrt{\sum_{i=1}^{N}\left( \sum_{k=1}^{K} w_i^k L_a^k L_r^k\right)^2}
\end{align*}
and
\[
\left|\frac{dV}{dt}\right|^2\leq\left( \sum_{i=1}^{N} |[\bm{e}^t]_{i}| L_a L^k\|\bm{e}^t\|\right)^2     \leq\|\bm{e}^t \|^2\sqrt{ N } L_a L_r
\]

Under the assumptions in Theorem \ref{thm1}, we have \(L^{k}=L_a^k L_r^k\), \(L_a L_r = L\), and:

\begin{equation}
\mathbb{E}\left(\sum_{k=1}^{K} w_i^k L^{k}\right) < \mathbb{E}\left(\sum_{k=1}^{K} w_i^k L\right) = \mathbb{E}(L),
\end{equation}
and
\begin{equation}
\operatorname{Var}\left(\sum_{k=1}^{K} w_i^k L^{k}\right) = \sum_{k=1}^{K} (w_i^k)^2 \operatorname{Var}(L^k) < \sum_{k=1}^{K} (w_i^k)^2 \operatorname{Var}(L) \leq \operatorname{Var}(L),
\end{equation}
where the last equality holds iff there exists \(k^{\ast}\) such that \(w_i^{k^\ast}=1\) and \(w_i^k=0\) for all \(k \neq k^{\ast}\). This shows that the multi-prototype system has a distributed effect that reduces the impact of perturbations, leading to slower growth in \(\frac{dV}{dt}\) compared to the single-prototype system, which has a concentrated effect. This means that the multi-prototype system is more robust.

\end{proof}

\section{Proof of Lemma \ref{lemma} }\label{proof}

\textbf{Lemma 3.2.}
We first assume that the learnt functions $\psi_r^k:\ \mathbb{R}^{2d}\to\mathbb{R}^{d},\psi_a^k:\mathbb{R}^{d}\to\mathbb{R}^{d}$ have bounded gradients. In other words, there exists $ A, R>0$, such that the following Jacobian matrices have bounded matrix norm:
\begin{equation}
    J_{\psi_r^k}([\bm{x},\bm{y}]) = \begin{pmatrix}
\frac{\partial \psi_{r,1}^k}{\partial x_1} & \cdots & \frac{\partial \psi_{r,1}^k}{\partial x_d} & \frac{\partial \psi_{r,1}^k}{\partial y_1} & \cdots & \frac{\partial \psi_{r,1}^k}{\partial y_d} \\
\vdots & \ddots & \vdots & \vdots & \ddots & \vdots \\
\frac{\partial \psi_{r,d}^k}{\partial x_1} & \cdots & \frac{\partial  \psi_{r,d}^k}{\partial x_d} & \frac{\partial  \psi_{r,d}^k}{\partial y_1} & \cdots & \frac{\partial  \psi_{r,d}^k}{\partial y_d}
\end{pmatrix},\quad  \|J_{\psi_r^k}([\bm{x},\bm{y}])\|\le R,
\end{equation}

\begin{equation}
    J_{\psi_a^k}(\bm{x}) =\begin{pmatrix}
\frac{\partial \psi_{a,1}^k}{\partial x_1} & \cdots & \frac{\partial \psi_{a,1}^k}{\partial x_d}, \\
\vdots & \ddots & \vdots  \\
\frac{\partial \psi_{a,d}^k}{\partial x_1} & \cdots & \frac{\partial  \psi_{a,d}^k}{\partial x_d} 
\end{pmatrix},\quad \|J_{\psi_a^k}(\bm{x})\|\le A.
\end{equation}

Then, given the initial state $(t_0, \bm{z}_1^{t_0}, \cdots, \bm{z}_N^{t_0}, \bm{w}_1, \cdots, \bm{w}_N)$, we claim that there exists $\varepsilon>0$, such that the ODE system Eqn. \ref{eq:ODE} has a unique solution in the interval $[t_0-\varepsilon, t_0 + \varepsilon].$

We first introduce the Picard–Lindel$\ddot{o}$f Theorem as below. 
\begin{theorem}\label{eq:theorm}(Picard–Lindel$\ddot{o}$f Theorem~\cite{coddington1956theory})
Let $D \subseteq \mathbb{R} \times \mathbb{R}^n$ be a closed rectangle with $\left(t_0, y_0\right) \in D$. Let $f: D \rightarrow \mathbb{R}^n$ be a function which is continuous in $t$ and Lipschitz continuous in $y$. In this case, there exists some $\varepsilon>0$ such that the initial value problem:
\begin{equation}
    y^{\prime}(t)=f(t, y(t)), \quad y\left(t_0\right)=y_0 .
\end{equation}
has a unique solution $y(t)$ on the interval $\left[t_0-\varepsilon, t_0+\varepsilon\right].$
\end{theorem}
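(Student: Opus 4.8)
The plan is to establish Theorem~\ref{eq:theorm} by the classical contraction-mapping argument, recasting the initial value problem as a fixed-point equation for an integral operator and then appealing to the Banach fixed-point theorem. First I would observe that, for a continuous candidate solution $y$, the fundamental theorem of calculus together with the continuity of $f$ shows that $y$ solves $y'(t) = f(t, y(t))$ with $y(t_0) = y_0$ on an interval around $t_0$ if and only if it satisfies the integral equation
\[
y(t) = y_0 + \int_{t_0}^{t} f(s, y(s))\, ds.
\]
This suggests defining the Picard operator $T$ by $(Ty)(t) = y_0 + \int_{t_0}^{t} f(s, y(s))\, ds$, so that solutions of the IVP are precisely the fixed points of $T$.

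Next I would construct a complete metric space on which $T$ is a contraction. Since $D$ is a closed rectangle, the Lipschitz condition in $y$ together with continuity in $t$ yields joint continuity of $f$, hence $f$ is bounded on the compact set $D$, say $\|f\| \le M$, with Lipschitz constant $L$ in the second variable. Fixing a smaller rectangle $R = [t_0 - a, t_0 + a] \times \overline{B}(y_0, b) \subseteq D$, I would set $\varepsilon = \min\{a,\ b/M,\ \theta/L\}$ for some $\theta \in (0,1)$ and take $X$ to be the set of continuous maps $y : [t_0 - \varepsilon, t_0 + \varepsilon] \to \overline{B}(y_0, b)$ with the supremum norm. As a closed subset of the Banach space $C([t_0 - \varepsilon, t_0 + \varepsilon], \mathbb{R}^n)$, the space $X$ is complete.

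I would then verify the two hypotheses of the contraction principle. The self-mapping property $T(X) \subseteq X$ follows from $\|(Ty)(t) - y_0\| \le M\varepsilon \le b$, guaranteed by $\varepsilon \le b/M$. The contraction property follows from the Lipschitz bound
\[
\|(Ty_1)(t) - (Ty_2)(t)\| \le L \int_{t_0}^{t} \|y_1(s) - y_2(s)\|\, ds \le L\varepsilon\, \|y_1 - y_2\|_{\infty},
\]
so that $T$ contracts with factor $L\varepsilon \le \theta < 1$. Applying the Banach fixed-point theorem produces a unique fixed point $y^{\ast} \in X$, which by the equivalence from the first step is the unique solution of the IVP on $[t_0 - \varepsilon, t_0 + \varepsilon]$.

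The main obstacle is the simultaneous calibration of $\varepsilon$: it must be small enough to keep $T$ inside the rectangle (forcing $\varepsilon \le b/M$), to make $T$ a strict contraction (forcing $L\varepsilon < 1$), and to stay within $D$ (forcing $\varepsilon \le a$). If one prefers not to sacrifice interval length for the contraction constant, an alternative is to equip $C([t_0 - a, t_0 + a], \mathbb{R}^n)$ with the weighted Bielecki norm $\|y\|_w = \sup_t e^{-2L|t - t_0|}\|y(t)\|$, under which $T$ becomes a contraction on the whole interval; either route completes the proof.
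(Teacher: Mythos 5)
The paper never proves this statement: it is the classical Picard--Lindel\"of theorem, quoted with a citation to Coddington's textbook and invoked as a black box at the end of the proof of Lemma~3.2 (existence and uniqueness for the prototypical graph ODE). There is therefore no internal proof to compare against, and your proposal should be judged as a self-contained reproof of a textbook result --- which it essentially is, and correctly so. You follow the standard contraction-mapping route: the equivalence of the IVP with the integral equation $y(t)=y_0+\int_{t_0}^{t}f(s,y(s))\,ds$, the observation that continuity in $t$ plus uniform Lipschitz continuity in $y$ gives joint continuity and hence a bound $\|f\|\le M$ on the compact rectangle, the choice $\varepsilon=\min\{a,\,b/M,\,\theta/L\}$ making the Picard operator a self-map of the complete space $X$ of continuous functions into $\overline{B}(y_0,b)$ and a strict contraction there, and Banach's fixed-point theorem. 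One step deserves to be made explicit rather than left implicit in your final sentence: Banach's theorem gives uniqueness of the fixed point only \emph{within} $X$, i.e., among functions taking values in $\overline{B}(y_0,b)$, while the theorem asserts uniqueness among all solutions on the interval. The gap closes in one line precisely because your $M$ bounds $f$ on all of $D$: any solution $y$ on $[t_0-\varepsilon,t_0+\varepsilon]$ necessarily has its graph in $D$ (else the ODE is not even defined), so $\|y(t)-y_0\|\le M|t-t_0|\le M\varepsilon\le b$, forcing $y\in X$ and hence $y=y^{\ast}$. With that sentence added the argument is complete. Your closing remark about the Bielecki norm is also accurate, with the caveat that it only removes the constraint $\varepsilon\le\theta/L$; the constraints $\varepsilon\le a$ and $\varepsilon\le b/M$ are still required to keep the iterates inside $D$, so the weighted norm buys a cleaner constant but not, by itself, a longer existence interval.
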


Then, we prove the following lemma. 

\begin{lemma}\label{lemma:new}
Suppose we have a series of L-Lipschitz continuous functions $\{f_i:\mathbb{R}^m \to \mathbb{R}^n\}_{i=1}^N$, and then their linear combination is also L-Lipschitz continuous, i.e., $\forall \{a_1,\cdots a_N\}\in [0,1]^N$, satisfying $\sum_{i=1}^N a_i = 1$, we have $\sum_{i=1}^N a_if_i$ is also L-Lipschitz continuous.
    
\end{lemma}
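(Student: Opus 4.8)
The plan is to prove Lemma \ref{lemma:new} directly from the definition of Lipschitz continuity, treating the convex combination coefficients as weights that do not amplify the individual bounds. First I would fix two arbitrary points $\bm{x}, \bm{y} \in \mathbb{R}^m$ and write the difference $\left\| \sum_{i=1}^N a_i f_i(\bm{x}) - \sum_{i=1}^N a_i f_i(\bm{y}) \right\|$. By linearity of the sum this equals $\left\| \sum_{i=1}^N a_i \left( f_i(\bm{x}) - f_i(\bm{y}) \right) \right\|$, and then the triangle inequality gives the upper bound $\sum_{i=1}^N a_i \left\| f_i(\bm{x}) - f_i(\bm{y}) \right\|$, where I use that each $a_i \ge 0$ so the coefficients can be pulled outside the norm without sign issues.

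Next I would apply the $L$-Lipschitz hypothesis on each $f_i$, namely $\left\| f_i(\bm{x}) - f_i(\bm{y}) \right\| \le L \left\| \bm{x} - \bm{y} \right\|$, to each summand. This yields the bound $\sum_{i=1}^N a_i L \left\| \bm{x} - \bm{y} \right\| = L \left\| \bm{x} - \bm{y} \right\| \sum_{i=1}^N a_i$. The final step is to invoke the normalization constraint $\sum_{i=1}^N a_i = 1$, which collapses the weighted sum to $L \left\| \bm{x} - \bm{y} \right\|$, establishing that $\sum_{i=1}^N a_i f_i$ is $L$-Lipschitz continuous. Since $\bm{x}$ and $\bm{y}$ were arbitrary, this completes the argument.

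This lemma is essentially a routine computation, so I do not anticipate a genuine obstacle in its proof; the only point requiring care is ensuring the coefficients are nonnegative (guaranteed here by $a_i \in [0,1]$) so that the triangle inequality step preserves the structure of the bound. The more substantive role of this lemma is its intended use in proving Lemma \ref{lemma}: the right-hand side of the graph ODE in Eqn. \ref{eq:ODE} is a convex combination $\sum_{k=1}^K \bm{w}_i^k f_i^k$ of the individual prototype vector fields, with weights satisfying $\sum_{k=1}^K \bm{w}_i^k = 1$ by the softmax normalization in Eqn. \ref{eq:weight}. The plan for the downstream argument is to first show that each prototype field $f_i^k$ is Lipschitz in the state variables — this is where the bounded-Jacobian assumptions on $\psi_r^k$ and $\psi_a^k$ enter, giving a composite Lipschitz constant controlled by $A$, $R$, and the neighborhood size — and then apply Lemma \ref{lemma:new} to conclude that the full aggregated vector field is itself Lipschitz continuous in the $\bm{z}$ variables. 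Combined with continuity in $t$, this verifies the hypotheses of the Picard–Lindelöf Theorem (Theorem \ref{eq:theorm}), delivering the existence and uniqueness of a solution on some interval $[t_0 - \varepsilon, t_0 + \varepsilon]$.
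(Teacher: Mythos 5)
Your proof is correct and matches the paper's argument exactly: both combine the triangle inequality (using $a_i \ge 0$), the $L$-Lipschitz bound on each $f_i$, and the normalization $\sum_{i=1}^N a_i = 1$ to collapse the bound to $L\|\bm{x}-\bm{y}\|$. Your remarks on the downstream use in Lemma \ref{lemma} also track the paper's actual application.
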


\begin{proof}
$\forall \bm{x},\bm{y}\in \mathbb{R}^m$, we have:
\begin{align}
    \|\sum_{i=1}^N a_if_i(\bm{x})- \sum_{i=1}^N a_if_i(\bm{y})\|&\le\sum_{i=1}^N a_i\|f_i(\bm{x})-f_i(\bm{y})\|\\
    &\le \sum_{i=1}^N a_i L\|\bm{x}-\bm{y}\| \\&= L\|\bm{x}-\bm{y}\|.
\end{align}
\end{proof}

Next, we show the proof of Lemma \ref{lemma}.

\begin{proof}

First, we can rewrite the ODE system Eqn. \ref{eq:ODE} as: 
\begin{equation}
\frac{d\bm{Z}^t}{dt} = \sum_{k=1}^K \bm{W}^k f^k(\bm{Z}^t) -\bm{Z}^t ,
\end{equation}
where $\bm{W}^k\in \mathbb{R}^{Nd \times Nd}$ is a diagonal matrix.
It is evident that the right hand side is continuous with respect to $t$ since it does not depend on $t$ directly. 

Then, for any continuous function $f: \mathbb{R}^n\to \mathbb{R}^m$, with the Mean Value Theorem, we have $\forall \bm{x},\bm{y}\in \mathbb{R}^n, \|f(\bm{x}) - f(\bm{y})\| = \|J_f(\bm{p})\|*\|\bm{x}-\bm{y}\|,$ where $\bm{p}$ is a point in the segment connecting $\bm{x}$ and $\bm{y}$.
Also, denote
\begin{equation}
J_{\psi_r^k,\bm{x}}([\bm{x},\bm{y}]) =  \begin{pmatrix}
\frac{\partial \psi_{r,1}^k}{\partial x_1} & \cdots & \frac{\partial \psi_{r,1}^k}{\partial x_d} \\
\vdots & \ddots & \vdots \\
\frac{\partial \psi_{r,d}^k}{\partial x_1} & \cdots & \frac{\partial  \psi_{r,d}^k}{\partial x_d}
\end{pmatrix}, 
  J_{\psi_r^k,\bm{y}}([\bm{x},\bm{y}]) =  \begin{pmatrix}
\frac{\partial \psi_{r,1}^k}{\partial y_1} & \cdots & \frac{\partial \psi_{r,1}^k}{\partial y_d} \\
\vdots & \ddots & \vdots \\
\frac{\partial \psi_{r,d}^k}{\partial y_1} & \cdots & \frac{\partial  \psi_{r,d}^k}{\partial y_d}
\end{pmatrix}.
\end{equation}
By assumption, we have
\begin{equation}
     \|J_{\psi_r^k,\bm{x}}([\bm{x},\bm{y}])\|, \quad \|J_{\psi_r^k,\bm{y}}([\bm{x},\bm{y}])\| \le \|J_{\psi_r^k}([\bm{x},\bm{y}])\| \le R.
\end{equation}

Now, denote $A(i,j)\in\mathbb{R}^{2d\times dN}$. For the indices from (1, idN+1) to (d, (i+1)dN), and from (d+1, jdN+1) to (2d, (j+1)dN), the matrix value is 1; the others are 0. By introducing $A(i,j)$, for all  $\bm{X} = \begin{pmatrix}
    \bm{x}_1\\
    \vdots\\
    \bm{x}_N
\end{pmatrix}, \bm{Y} = 
\begin{pmatrix}
    \bm{y}_1\\
    \vdots\\
    \bm{y}_N
\end{pmatrix} \in\mathbb{R}^{dN},$ we have:
\begin{align}
\|\psi_r^k(A(i,j)\bm{X})-\psi_r^k(A(i,j)\bm{Y})\|&\le \|\psi_r^k([\bm{x}_i, \bm{x}_j])-\psi_r^k([\bm{y}_i, \bm{x}_j])\|+\|\psi_r^k([\bm{y}_i, \bm{x}_j])-\psi_r^k([\bm{y}_i, \bm{y}_j])\|\\
 &= \|J_{\psi_r^k, \bm{x}}([\bm{p}_i,\bm{x}_j])\|*\|\bm{x}_i-\bm{y}_i\|+\|J_{\psi_r^k,\bm{y}}([\bm{y}_i,\bm{p}_j])\|*\|\bm{x}_j-\bm{y}_j\| \quad(MVT)\\
&\le R\|\bm{x}_i-\bm{y}_i\|+R\|\bm{x}_j-\bm{y}_j\|\\
&\le R\|\bm{X}- \bm{Y}\|,
\end{align}
where $\bm{p}_i$ is a point in the segment connecting $\bm{x}_i$ and $\bm{y}_i$, and a similar definition is for $\bm{p}_j$. Note that we have $\psi_r^k$ is R-Lipschitz continuous. Therefore, by Lemma \ref{lemma:new}, the following linear combination is also R-Lipschitz continuous:
\begin{equation}
l^k(\bm{Z}^t) = \sum_{j^t\in \mathcal{S}(i^t)} \psi_r^k([A(i^t,j^t)\bm{Z}^t]).    
\end{equation}
Thus, for all $ \bm{X},\bm{Y}\in \mathbb{R}^{dN}$, we have:
\begin{align}
\|f^k(\bm{X})-f^k(\bm{Y})\| &= \|\psi_a^k(l^k(\bm{X}))-\psi_a^k(l^k(\bm{Y}))\|\\
&\le A\|l^k(\bm{X})-l^k(\bm{Y})\| \\& \le ARN \|\bm{X}-\bm{Y}\|.    
\end{align}
Again, we have each $f^k$ is ARN-Lipschitz continuous, so their linear combination $\sum_{k=1}^K \bm{W}^k f^k$ will also be Lipschitz continuous. 
Finally, we have
\begin{align}
\|[\sum_{k=1}^K \bm{W}^k f^k(\bm{X}) - \bm{X}]-[\sum_{k=1}^K \bm{W}^k f^k(\bm{Y}) -\bm{Y}]\| &\le \|\sum_{k=1}^K \bm{W}^k f^k(\bm{X}) -\sum_{k=1}^K \bm{W}^k f^k(\bm{Y})\|\\&+\|\bm{X}-\bm{Y}\|\\
&\le (ARNK+1)\|\bm{X}-\bm{Y}\|.    
\end{align}
Thus, the right hand side will be (ARNK+1)-Lipschitz continuous. According to the Theorem \ref{eq:theorm}, we prove the uniqueness of the solution to Eqn. \ref{eq:ODE}.
\end{proof}

\section{More Related Work}

\subsection{Graph Neural Networks}
Graph Neural Networks (GNNs)~\citep{kipf2016semi,xu2019powerful,velivckovic2018graph,feng2023towards,ju2024comprehensive,lienen2023generative,steeven2024space} have shown remarkable efficacy in handling a range of graph-based machine learning tasks such as node classification~\citep{yang2021hgat} and graph classification~\citep{liu2022graph}. Typically, they adopt the message passing mechanism, where each node aggregates messages from its adjacent nodes for updated node representations. Recently, researchers have started to focus more on realistic graphs that do not obey the homophily assumption and developed several GNN approaches to tackle heterophily~\citep{zhu2021graph,li2022finding,zhu2020beyond}. These approaches typically leverage new graph structures~\citep{zhu2020beyond,suresh2021breaking} and modify the message passing procedures~\citep{chien2021adaptive,yan2022two} to mitigate the influence of potential heterophily. 
In our \method{}, we focus on interacting dynamics systems instead. In particular, due to the local heterophily, different objects should have different interacting patterns, and therefore we infer object-level contexts from historical data.

\RRR{\section{Limitation}

One limitation of our PGODE is that it does not consider the symmetry of physics, which is an important property in physical simulations~\cite{satorras2021n,xu2023eqmotion,wu2024equivariant}. In future works, we will incorporate the symmetry of physics to further enhance the expressivity of our method, which builds high-quality equivariant graph ODE models for dynamical system modeling.}

\section{Detail of Baselines}
The proposed method is compared with these competing baselines as follows:
\begin{itemize}[leftmargin=*]
    \item LSTM~\citep{hochreiter1997long} has been broadly utilized for sequence prediction tasks. Compared with classic RNNs, LSTM incorporates three critical gates, i.e., the forget gate, the input gate, and the output gate, which can effectively understand and retain important long-term dependencies within the data sequences.
    \item GRU~\citep{cho2014learning} 
    is another popular RNN architecture, which employs the gating mechanism to control the information flow during propagation. GRU has an improved computational efficiency compared LSTM. 
    \item NODE~\citep{chen2018neural} is the first method to introduce a continuous neural network based on the residual connection. It has been shown effective in time-series forecasting. 
    \item LG-ODE~\citep{huang2020learning} incorporates graph neural networks with neural ODE, which can capture continuous interacting dynamics in irregularly-sampled partial observations.
    \item MP-NODE~\citep{gupta2022learning} integrate the message passing mechanism into neural ODEs, which can capture sub-system relationships during the evolution of homogeneous systems. 
    \item SocialODE~\citep{wen2022social} simulates the evolution of agent states and their interactions using a neural ODE architecture, which shows remarkable performance in multi-agent trajectory forecasting.
    \item HOPE~\citep{luo2023hope} is a recently proposed graph ODE method, which leverages a twin encoder to learn hidden representations. These representations are fed into a high-order graph ODE to learn long-term correlations from complicated dynamical systems. 
    \RRR{\item EGNN~\cite{satorras2021n} is a graph neural network architecture, which promises the equivalence to E(3) transformations. It shows superior performance for learning from physical simulations. 
    \item EqMotion~\cite{xu2023eqmotion} is an efficient model, which includes both an equivariant geometric feature learning module and an invariant pattern feature for comprehensive motion prediction. 
    }
\end{itemize}

\section{Dataset Details}\label{dataset_detail}

We use four simulation datasets to evaluate our proposed \method{}, including physical and molecular dynamic systems.
We will introduce the details of these four datasets in this part.

\begin{itemize}[leftmargin=*]
\item \textit{Springs} \& \textit{Charged}. The two physical dynamic simulation datasets \textit{Springs} and \textit{Charged} are commonly used in the field of machine learning for simulating physical systems. 
The \textit{Springs} dataset simulates a system of interconnected springs governed by Hooke's law.
Each spring has inherent properties such as elasticity coefficients and initial positions, representing a dynamic mechanical system.
Each sample in the \textit{Springs} dataset contains 10 interacting springs with information about the current state, i.e., velocity and acceleration, and additional properties, i.e., mass and damping coefficients.
Similar to the \textit{Springs} dataset, \textit{Charged} is another popular physical dynamic simulation dataset that simulates electromagnetic phenomena.
The objects in \textit{Charged} are replaced by the electronics.
We use the box size $\alpha$, the initial velocity $\beta$, the interaction strength $\gamma$, and spring\/charged probability $\delta$ as the system parameters in the experiments.
It is noteworthy that the objects attract or repel with equal probability in the \textit{Charged} system but unequal probability in the spring system. \RR{Both systems have a given graph indicating fixed interactions from real springs or electric charge effects. }

\item \textit{5AWL} \& \textit{2N5C}. To evaluate our approach on modeling molecular
dynamic systems, we construct two datasets from two proteins, \textit{5AWL} and \textit{2N5C}, which can be accessed from the RCSB\footnote{https://www.rcsb.org}.
First, we repair missing residues, non-standard residues, missing atoms, and hydrogen atoms in the selected protein. 
Additionally, we adjust the size of the periodic boundary box to ensure that it is sufficiently large, thus avoiding truncation effects and abnormal behavior of the simulation system during the data simulation process.
Then, we perform simulations on the irregular molecular motions within the protein using Langevin Dynamics~\citep{garcia1998langevin} under the NPT (isothermal-isobaric ensemble) conditions, with parameters sampled from the specified range, and we extract a frame every 0.2 $ps$ to record the protein structure, which constitutes the dataset used for supervised learning. 
In the two constructed datasets, we use the temperature $t$, pressure value $p$, and frictional coefficient $\mu$ as the dynamic system parameters.
Langevin Dynamics is a mathematical model used to simulate the flow dynamics of molecular systems~\citep{bussi2007accurate}. 
It can simplify complex systems by replacing some degrees of freedom of the molecules with stochastic differential equations.
For a dynamic system containing $N$ particles of mass $m$, with coordinates given by $X = X(t)$, the Langevin equation of it can be formulated as follows:
\begin{equation}
    m\frac{d^{2}X}{dt^{2}} = -\Delta{U}(X)-\mu\frac{dX}{dt}+\sqrt{2\mu{k_b}T}R(t),
\end{equation}
where $\mu$ represents the frictional coefficient, $\Delta{U}(X)$ is the interaction potential between particles, $\Delta$ is the gradient operator, $T$ is the temperature, $k_b$ is Boltzmann constant and $R(t)$ denotes the delta-correlated stationary Gaussian process.
\end{itemize}

\section{Implementation Details}\label{imple_detail}

\begin{table*}[t]
\centering
\caption{Datasets and distributions of system parameters. For the OOD test set, there is at least one of the system parameters outside the range utilized for training. $\alpha$: box size, $\beta$: initial velocity norm,$\gamma$: interaction strength, $\delta$: spring/charged probability. $t$: temperature, $p$: pressure, $\mu$: frictional coefficient. }
\vspace{0.05cm}
\begin{tabular}{lcccl}
    \toprule
  & \textit{Springs}            & \textit{Charged}                & \textit{5AWL}/\textit{2N5C} &  \\
\cmidrule(r){2-4}
Parameters       & $\alpha, \beta, \gamma, \delta$              & $\alpha, \beta, \gamma, \delta$           &    $t, p, \mu$       &  \\
Train/Val/Test      & $\begin{array}{c}
A = \{\alpha \in [4.9, 5.1] \}\\
B = \{\beta\in[0.49, 0.51]\} \\ 
C = \{\gamma\in[0.09, 0.11]\}\\
D = \{\delta\in [0.49, 0.51]\} \\
\Omega_{\text{train}} = (A \times B \times C \times D) \\
\end{array}$ & 
$\begin{array}{c}
A = \{\alpha \in [4.9, 5.1] \}\\
B = \{\beta\in[0.49, 0.51]\} \\ 
C = \{\gamma\in[0.9, 1.1]\}\\
D = \{\delta\in [0.49, 0.51]\} \\
\Omega_{\text{train}} = (A \times B \times C \times D) \\
\end{array}$ & $\begin{array}{c}
  T = \{t \in [290, 310] \}\\
  P = \{p\in[0.9, 1.1]\} \\ 
  M = \{\mu\in[0.9, 1.1]\}\\
  \Omega_{\text{train}} = (T \times P \times M) \\
  \end{array}$
  \\
  \\
OOD Test Set    &   $\begin{array}{c} 
A = \{\alpha \in [4.8, 5.2] \}\\
B = \{\beta\in[0.48, 0.52]\} \\ 
C = \{\gamma\in[0.08, 0.12]\}\\
D = \{\delta\in [0.48, 0.52]\} \\
\Omega_{\text{OOD}} = \\ (A \times B \times C \times D) \setminus \Omega_{\text{train}} \\
\textcolor{white}{.} \\
\end{array}$  &  
$\begin{array}{c} 
A = \{\alpha \in [4.8, 5.2] \}\\
B = \{\beta\in[0.48, 0.52]\} \\ 
C = \{\gamma\in[0.8, 1.2]\}\\
D = \{\delta\in [0.48, 0.52]\} \\
\Omega_{\text{OOD}} = \\ (A \times B \times C \times D) \setminus \Omega_{\text{train}} \\
\textcolor{white}{.} \\
\end{array}$ & 
$\begin{array}{c} 
  T = \{t \in [280, 320] \}\\
  P = \{p\in[0.8, 1.2]\} \\ 
  M = \{\mu\in[0.8, 1.2]\}\\
  \Omega_{\text{OOD}} = \\ (T \times P \times M) \setminus \Omega_{\text{train}} \\
  \textcolor{white}{.} \\
  \end{array}$ \\
\midrule
Number of samples \\ 
Train/Val/Test  & \multicolumn{2}{c}{1000/200/200} & 200/50/50\\
OOD Test Set& \multicolumn{2}{c}{200} & 50  \\
\bottomrule
\end{tabular}
\label{tab:datasets}
\end{table*}

\begin{table*}[t]
  \tabcolsep=6.5pt
    \vspace{-0.2cm}
  \caption{Performance comparison with NRI, AgentFormer, and I-GPODE on physical dynamics simulations (MSE $\times 10^{-2}$). NRI, AgentFormer, and I-GPODE are out of memory on molecular dynamics simulations.}\label{supresult:physical}
  \resizebox{\textwidth}{!}{
  \begin{tabular}{l|l|cc|cc|cc|cc|cc|cc}
  \toprule
  \multirow{2}{*}{Dataset} &Prediction Length & \multicolumn{2}{c|}{12 \footnotesize{(ID)}} & \multicolumn{2}{c|}{24 \footnotesize{(ID)}} & \multicolumn{2}{c|}{36 \footnotesize{(ID)}} & \multicolumn{2}{c|}{12 \footnotesize{(OOD)}} & \multicolumn{2}{c|}{24 \footnotesize{(OOD)}} & \multicolumn{2}{c}{36 \footnotesize{(OOD)}} \\ 
  &Variable & $q$ & $v$ & $q$ & $v$ & $q$ & $v$ & $q$ & $v$ & $q$ & $v$ & $q$ & $v$  \\ \midrule
  \multirow{4}{*}{\textit{Springs}}& NRI & 0.103 & 0.425 & 0.210 & 0.681 & 0.693 & 2.263 & 0.119 & 0.472 & 0.246 & 0.770 & 0.807 & 2.406  \\
  &AgentFormer & 0.115 & 0.163 & 0.202 & 0.517 & 1.656 & 1.691 & 0.157 & 0.195 & 0.243 & 0.505 & 1.875 & 1.913  \\
  &I-GPODE & 0.159 & 0.479 & 0.746 & 3.002 & 1.701 & 7.433 & 0.173 & 0.498 & 0.796 & 3.193 & 1.818 & 7.322  \\
  &\cellcolor{pink}\method{} (Ours) & \cellcolor{pink}\textbf{0.035}  & \cellcolor{pink}\textbf{0.124}  & \cellcolor{pink}\textbf{0.070}  & \cellcolor{pink}\textbf{0.262}  & \cellcolor{pink}\textbf{0.296}  & \cellcolor{pink}\textbf{1.326} & \cellcolor{pink}\textbf{0.047} & \cellcolor{pink}\textbf{0.138}  & \cellcolor{pink}\textbf{0.088}  & \cellcolor{pink}\textbf{0.291}  & \cellcolor{pink}\textbf{0.309}  & \cellcolor{pink}\textbf{1.337}   \\ \midrule
  \multirow{4}{*}{\textit{Charged}}&NRI & 0.901 & 2.702 & 3.225 & 3.346 & 7.770 & 4.543 & 1.303 & 2.726 & 3.678 & 3.548 & 8.055 & 4.752  \\
  &AgentFormer & 1.076 & 2.476 & 3.631 & 3.044 & 7.513 & 3.944 & 1.384 & 2.514 & 4.224 & 3.199 & 8.985 & 4.002  \\
  &I-GPODE & 1.044 & 2.818 & 3.407 &	3.751 & 7.292 & 4.570 & 1.322 & 2.715 & 3.805 & 3.521 & 8.011 & 4.056  \\ 
  &\cellcolor{pink}\method{} (Ours) & \cellcolor{pink}\textbf{0.578}  & \cellcolor{pink}\textbf{2.196}  & \cellcolor{pink}\textbf{2.037} & \cellcolor{pink}\textbf{2.648}  & \cellcolor{pink}\textbf{4.804}  & \cellcolor{pink}\textbf{3.551}  & \cellcolor{pink}\textbf{0.802}  & \cellcolor{pink}\textbf{2.135}  & \cellcolor{pink}\textbf{2.584}  & \cellcolor{pink}\textbf{2.663}  & \cellcolor{pink}\textbf{5.703}  & \cellcolor{pink}\textbf{3.703}  \\ \bottomrule
  \end{tabular}}
  \vspace{-0.1cm}
\end{table*}

\begin{table*}[h]
  \tabcolsep=6pt
  \caption{Mean Squared Error (MSE) $\times 10^{-2}$ on \textit{Springs}. }\label{table:5}
  \resizebox{\textwidth}{!}{
  \begin{tabular}{l|l|cccc|cccc|cccc}
  \toprule
  \multirow{2}{*}{Distribution} &Prediction Length & \multicolumn{4}{c|}{12} & \multicolumn{4}{c|}{24} & \multicolumn{4}{c}{36} \\
  &Variable & $q_x$ & $q_y$ & $v_x$ & $v_y$ & $q_x$ & $q_y$ & $v_x$ & $v_y$ & $q_x$ & $q_y$ & $v_x$ & $v_y$  \\ \midrule
  \multirow{8}{*}{ID}&LSTM& 0.324 & 0.250 & 0.909 & 0.931 & 0.679 & 0.638 & 2.695 & 2.623 & 1.253 & 1.304 & 5.023 & 6.434   \\
  &GRU& 0.496 & 0.291 & 0.565 & 0.628 & 0.873 & 0.623 & 1.711 & 2.001 & 1.368 & 1.128 & 2.980 & 3.912  \\
  &NODE& 0.165 & 0.148 & 0.649 & 0.479 & 0.722 & 0.621 & 2.534 & 2.293 & 1.683 & 1.534 & 6.323 & 6.142    \\
  &LG-ODE&	0.077 & 0.077 & 0.264 & 0.272 & 0.174 & 0.135 & 0.449 & 0.576 & 0.613 & 0.441 & 1.757 & 2.528   \\
  &MPNODE& 0.080 & 0.072 & 0.222 & 0.263 & 0.237 & 0.105 & 0.407 & 0.506 & 0.866 & 0.335 & 1.469 & 2.006   \\
  &SocialODE& 0.069 & 0.068 & 0.205 & 0.315 & 0.138 & 0.120 & 0.391 & 0.630 & 0.429 & 0.400 & 1.751 & 2.624    \\
  &HOPE& 0.087 & 0.053 & 0.152 & 0.200 & 0.571 & 0.342 & 0.707 & 1.206 & 2.775 & 2.175 & 4.412 & 6.405  \\
  &\cellcolor{pink}\method{} (Ours) & \cellcolor{pink}\textbf{0.033} & \cellcolor{pink}\textbf{0.037} & \cellcolor{pink}\textbf{0.122} & \cellcolor{pink}\textbf{0.127} & \cellcolor{pink}\textbf{0.074} & \cellcolor{pink}\textbf{0.066} & \cellcolor{pink}\textbf{0.239} & \cellcolor{pink}\textbf{0.286} & \cellcolor{pink}\textbf{0.318} & \cellcolor{pink}\textbf{0.273} & \cellcolor{pink}\textbf{1.186} & \cellcolor{pink}\textbf{1.466}    \\ \midrule
  \multirow{8}{*}{OOD}&LSTM&  0.499 & 0.449 & 1.086 & 1.227 & 1.019 & 0.857 & 2.847 & 2.466 & 1.768 & 1.415 & 5.154 & 5.293      \\
  &GRU&  0.714 & 0.469 & 0.713 & 0.703 & 1.280 & 0.905 & 1.795 & 2.096 & 1.844 & 1.497 & 2.852 & 3.994    \\
  &NODE& 0.246 & 0.209 & 0.997 & 0.585 & 0.876 & 0.687 & 2.790 & 2.269 & 2.002 & 1.663 & 6.349 & 5.670    \\
  &LG-ODE&	0.093 & 0.083 & 0.272 & 0.327 & 0.185 & 0.172 & 0.463 & 0.661 & 0.684 & 0.545 & 1.767 & 2.645   \\
  &MPNODE& 0.107 & 0.081 & 0.230 & 0.268 & 0.299 & 0.126 & 0.420 & 0.528 & 0.967 & 0.386 & 1.464 & 1.969   \\
  &SocialODE& 0.082 & 0.076 & 0.221 & 0.350 & 0.151 & 0.156 & 0.414 & 0.726 & 0.488 & 0.495 & 1.793 & 2.826    \\
  &HOPE& 0.094 & 0.058 & 0.178 & 0.264 & 0.506 & 0.523 & 1.031 & 1.603 & 2.369 & 2.251 & 3.701 & 8.291   \\
  &\cellcolor{pink}\method{} (Ours) & \cellcolor{pink}\textbf{0.046} & \cellcolor{pink}\textbf{0.048} & \cellcolor{pink}\textbf{0.133} & \cellcolor{pink}\textbf{0.144} & \cellcolor{pink}\textbf{0.094} & \cellcolor{pink}\textbf{0.081} & \cellcolor{pink}\textbf{0.286} & \cellcolor{pink}\textbf{0.297} & \cellcolor{pink}\textbf{0.336} & \cellcolor{pink}\textbf{0.281} & \cellcolor{pink}\textbf{1.360} & \cellcolor{pink}\textbf{1.313}  \\ \bottomrule
  \end{tabular}}
  \vspace{-0.1cm}
\end{table*}

In our experiments, we employ a rigorous data split strategy to ensure the accuracy of our results.
Specifically, we split the whole datasets into four different parts, including the normal three sets, i.e., training, validating and in-distribution (ID) test sets and an out-of-distribution (OOD) test set. 
For the physical dynamic datasets, we generate 1200 samples for training and validating, 200 samples for ID testing and 200 samples for OOD testing.
For the molecular dynamic datasets, we construct 200 samples for training, 50 samples for validating, 50 samples for ID testing and 50 samples for testing in OOD settings.

Each sample in the datasets has a group of distinct system parameters as shown in Table~\ref{tab:datasets}. For training, validation and ID test samples, we randomly sample system parameters in the space of $\Omega_{train}$. For OOD samples, the system parameters come from $\Omega_{OOD}$ randomly, which indicates the distribution shift compared with the training domain. 
In our experiments, we set the conditional length to 12, and we used three different prediction lengths, i.e., 12, 24, and 36.

We leverage PyTorch~\citep{paszke2017automatic} and torchdiffeq package~\citep{kidger2021hey} to implement all the compared approaches and our \method{}. All these experiments in this work are performed on a single NVIDIA A40 GPU. The fourth-order Runge-Kutta method from torchdiffeq is adopted as the ODE solver. We employ a set of one-layer GNN prototypes with a hidden dimension of 128 for graph ODE. The number of prototypes is set to $5$ as default. For optimization, we utilize an Adam optimizer \citep{DBLP:journals/corr/KingmaB14} with an initial learning rate of $0.0005$. The batch size is set to 256 for the physical dynamic simulation datasets and 64 for the molecular dynamic simulation datasets. \RRRR{In some real-world applications, we could face region-level contexts, which influence the dynamics of a group of objects. The potential solution is to learn the embedding of region-level contexts and then incorporate them into prototypical graph ODE, i.e., replacing $\bm{\omega}_i=\rho([\bm{u}_i,\bm{g}_i])$ with $\bm{\omega}_i=\rho([\bm{u}_i,\bm{r}_i,\bm{g}_i])$ where $\bm{r}_i$ denotes the region-level embedding.}

\section{More Experiment Results}

\subsection{Performance Comparison}\label{sup:perform}

To begin, we compare with our \method{} with more baselines, i.e., AgentFormer~\citep{yuan2021agentformer}, NRI~\citep{kipf2018neural} and I-GPODE~\citep{yildiz2022learning} in our performance comparison. 
We also compared our \method{} with two equivalence-based methods, i.e., EGNN~\cite{satorras2021n} and EqMotion~\cite{xu2023eqmotion}. 
The results of these comparisons are presented in Table \ref{supresult:physical} and our method outperforms the compared methods. In addition, we show the performance of the compared methods in two different coordinates of positions and velocities, i.e., $q_x$, $q_y$, $v_x$ and $v_y$. The compared results on \textit{Springs} and \textit{Charged} are shown in Table \ref{table:5} and Table \ref{table:6}, respectively. 
\RRR{The compared results of our methods and equivalence-based methods are shown in Table \ref{supresult:egnn}. From the results, we can observe the superiority of the proposed \method{} in capturing complicated interacting patterns under both ID and OOD settings. In particular, compared with EGNN, our method can model continuous and complicated dynamics with better performance.}

\RRRR{Besides, we triple the number of agents in physical dynamics simulations. The compared results are shown in Table \ref{supresult:obj_triple}. We can observe that our proposed PGODE surpasses the performance of baseline models, highlighting the superiority of the proposed method. The compared performance on COVID-19~\cite{luo2023hope} can be seen in Table \ref{supresult:covid}. From the results, we can further validate the superiority of the proposed PGODE in real-world datasets.}

\begin{table*}[t]
  \tabcolsep=6pt
  \caption{Mean Squared Error (MSE) $\times 10^{-2}$ on  \textit{Charged}. }\label{table:6}
  \resizebox{\textwidth}{!}{
  \begin{tabular}{l|l|cccc|cccc|cccc}
  \toprule
  \multirow{2}{*}{Distribution} &Prediction Length & \multicolumn{4}{c|}{12} & \multicolumn{4}{c|}{24} & \multicolumn{4}{c}{36} \\ 
  &Variable & $q_x$ & $q_y$ & $v_x$ & $v_y$ & $q_x$ & $q_y$ & $v_x$ & $v_y$ & $q_x$ & $q_y$ & $v_x$ & $v_y$  \\ \midrule
  \multirow{8}{*}{ID}&LSTM& 0.743 & 0.846 & 2.913 & 3.145 & 2.797 & 3.052 & 3.605 & 3.863 & 6.477 & 6.660 & 4.240 & 4.423   \\
  &GRU& 0.764 & 0.799 & 2.931 & 3.063 & 2.709 & 2.901 & 3.572 & 3.709 & 5.657 & 6.281 & 4.068 & 4.227  \\
  &NODE& 0.743 & 0.808 & 2.764 & 2.777 & 2.913 & 3.114 & 3.432 & 3.451 & 6.468 & 6.868 & 3.997 & 4.089    \\
  &LG-ODE&	0.736 & 0.783 & 2.322 & 2.414 & 2.320 & 2.731 & 3.361 & 3.268 & 5.188 & 6.782 & 6.194 & 5.043   \\
  &MPNODE& 0.720 & 0.759 & 2.414 & 2.496 & 2.379 & 2.536 & 3.589 & 3.738 & 5.636 & 5.614 & 5.472 & 7.046   \\
  &SocialODE& 0.630 & 0.695 & 2.311 & 2.358 & 2.252 & 2.631 & 3.509 & 2.995 & 5.743 & 7.076 & 5.701 & 4.122   \\
  &HOPE& 0.593 & 0.635 & 2.295 & 2.337 & 3.214 & 2.938 & 3.279 & 3.482 & 9.289 & 7.845 & 8.406 & 8.511  \\
  &\cellcolor{pink}\method{} (Ours) & \cellcolor{pink}\textbf{0.555} & \cellcolor{pink}\textbf{0.600} & \cellcolor{pink}\textbf{2.164} & \cellcolor{pink}\textbf{2.228} & \cellcolor{pink}\textbf{1.940} & \cellcolor{pink}\textbf{2.134} & \cellcolor{pink}\textbf{2.624} & \cellcolor{pink}\textbf{2.673} & \cellcolor{pink}\textbf{4.449} & \cellcolor{pink}\textbf{5.159} & \cellcolor{pink}\textbf{3.778} & \cellcolor{pink}\textbf{3.324}    \\ \midrule
  \multirow{8}{*}{OOD}&LSTM&  1.130 & 1.123 & 3.062 & 2.992 & 4.026 & 3.950 & 3.768 & 3.512 & 7.934 & 8.435 & 4.517 & 3.925       \\
  &GRU&  1.072 & 1.012 & 3.108 & 2.948 & 3.893 & 3.602 & 3.844 & 3.428 & 6.970 & 8.061 & 4.485 & 3.718     \\
  &NODE& 1.185 & 1.062 & 2.956 & 2.732 & 4.057 & 3.804 & 3.645 & 3.480 & 8.622 & 8.372 & 5.097 & 4.376     \\
  &LG-ODE&	0.999 & 0.866 & 2.581 & 2.521 & 2.797 & 3.239 & 4.200 & 2.978 & 5.996 & 7.593 & 8.422 & 4.309    \\
  &MPNODE& 1.092 & 0.897 & 2.487 & 2.623 & 2.967 & 2.828 & 3.670 & 4.001 & 6.051 & 6.118 & 6.029 & 7.566    \\
  &SocialODE& 0.865 & 0.924 & 2.481 & 2.359 & 2.610 & 3.177 & 3.968 & 2.836 & \textbf{5.482} & 7.102 & 8.530 & 4.150    \\
  &HOPE& 0.839 & 0.918 & 2.466 & 2.484 & 3.586 & 3.783 & 3.417 & 3.442 & 11.254 & 10.652 & 10.133 & 8.107   \\
  &\cellcolor{pink}\method{} (Ours) & \cellcolor{pink}\textbf{0.739} & \cellcolor{pink}\textbf{0.865} & \cellcolor{pink}\textbf{2.159} & \cellcolor{pink}\textbf{2.110} & \cellcolor{pink}\textbf{2.524} & \cellcolor{pink}\textbf{2.643} & \cellcolor{pink}\textbf{2.704} & \cellcolor{pink}\textbf{2.623} & \cellcolor{pink}5.748 & \cellcolor{pink}\textbf{5.659} & \cellcolor{pink}\textbf{4.017} & \cellcolor{pink}\textbf{3.389}  \\ \bottomrule
  \end{tabular}}
  \vspace{-0.1cm}
\end{table*}

\begin{table*}[h]
  \tabcolsep=6.5pt
    \vspace{-0.2cm}
    \centering
  \caption{\RRR{Performance comparison with EGNN, EqMotion, and \method{} on physical dynamics simulations (MSE $\times 10^{-2}$).}}\label{supresult:egnn}
  \begin{tabular}{l|cc|cc|cc|cc}
  \toprule
  Dataset & \multicolumn{4}{c|}{\textit{Springs}} & \multicolumn{4}{c}{\textit{Charged}} \\ 
  Prediction Length & \multicolumn{2}{c|}{12 \footnotesize{(ID)}} & \multicolumn{2}{c|}{12 \footnotesize{(OOD)}} & \multicolumn{2}{c|}{12 \footnotesize{(ID)}} & \multicolumn{2}{c}{12 \footnotesize{(OOD)}} \\ 
  Variable &$q_x$&$q_y$&$q_x$&$q_y$&$q_x$&$q_y$&$q_x$&$q_y$  \\ \midrule
  EGNN	&0.140&	0.147&	0.150&	0.149&	2.092&	2.227&	2.139&	2.244\\
EqMotion&	0.077&	0.080&	0.084&	0.080&	0.807&	0.893&	0.867&	0.936\\
\cellcolor{pink}\method{} (Ours)&\cellcolor{pink}\textbf{0.033}&\cellcolor{pink}\textbf{0.037}&\cellcolor{pink}\textbf{0.046}&\cellcolor{pink}\textbf{0.048}&\cellcolor{pink}\textbf{0.555}&\cellcolor{pink}\textbf{0.600}&\cellcolor{pink}\textbf{0.739}&\cellcolor{pink}\textbf{0.865}\\
  \bottomrule
  \end{tabular}
  \vspace{-0.1cm}
\end{table*}

\begin{table*}[!h]
  \tabcolsep=6.5pt
  \centering
  \caption{\RRR{Performance comparison on \textit{Springs} (MSE $\times 10^{-2}$) with triple number of objects.} }\label{supresult:obj_triple}
  \resizebox{\textwidth}{!}{
  \begin{tabular}{l|cc|cc|cc|cc|cc|cc}
  \toprule
    Prediction Length & \multicolumn{2}{c|}{12 \footnotesize{(ID)}} & \multicolumn{2}{c|}{24 \footnotesize{(ID)}} & \multicolumn{2}{c|}{36 \footnotesize{(ID)}} & \multicolumn{2}{c|}{12 \footnotesize{(OOD)}} & \multicolumn{2}{c|}{24 \footnotesize{(OOD)}} & \multicolumn{2}{c}{36 \footnotesize{(OOD)}} \\ 
    Variable & $q$ & $v$ & $q$ & $v$ & $q$ & $v$ & $q$ & $v$ & $q$ & $v$ & $q$ & $v$  \\ \midrule
  
    SocialODE      & 0.152   & 0.364& 0.521   & 0.950& 2.438   & 3.785& 0.275    & 0.584& 0.687    & 1.044& 2.544    & 3.981\\
    HOPE           & 0.070   & 0.195& 0.734   & 1.892& 3.571   & 5.766& 0.241    & 0.592& 0.893    & 1.840& 3.972    & 5.841\\
   \cellcolor{pink}\method{} (Ours) & \cellcolor{pink}\textbf{0.059}& \cellcolor{pink}\textbf{0.126}& \cellcolor{pink}\textbf{0.179   }& \cellcolor{pink}\textbf{0.471}& \cellcolor{pink}\textbf{1.150   }& \cellcolor{pink}\textbf{2.041}& \cellcolor{pink}\textbf{0.224    }& \cellcolor{pink}\textbf{0.415}& \cellcolor{pink}\textbf{0.464    }& \cellcolor{pink}\textbf{0.886}& \cellcolor{pink}\textbf{1.686    }& \cellcolor{pink}\textbf{2.145}\\
  \bottomrule
  \end{tabular}
  }
  \vspace{-0.1cm}
\end{table*}

\begin{table*}[!h]
  \tabcolsep=6.5pt
    \vspace{-0.2cm}
    \centering
  \caption{\RRRR{Performance comparison on COVID-19.}}\label{supresult:covid}
  \begin{tabular}{l|cc|cc|cc}
  \toprule
  \multirow{2}{*}{Method} & \multicolumn{2}{c|}{1-week-ahead} & \multicolumn{2}{c|}{2-week-ahead} & \multicolumn{2}{c}{3-week-ahead}\\ 
  &MAE&RMSE&MAE&RMSE&MAE&RMSE  \\ \midrule
  MPNODE&152.7&237.5&272.0&549.4&248.7&385.8\\
HOPE  &85.64 & 146.0 &180.9 & 275.2  &243.1 & 373.3 \\
\cellcolor{pink}\method{} (Ours)&\cellcolor{pink}\textbf{82.99} &\cellcolor{pink}\textbf{129.2} &\cellcolor{pink}\textbf{165.2} &\cellcolor{pink}\textbf{250.6} &\cellcolor{pink}\textbf{220.6}&\cellcolor{pink}\textbf{325.4} \\
  \bottomrule
  \end{tabular}
  \vspace{-0.1cm}
\end{table*}

\begin{table*}[!h]
  \tabcolsep=7pt
  \centering
  \caption{Ablation study on \textit{2N5C} (MSE $\times 10^{-3}$) with a prediction length of 24. }\label{tab:ablation_sup}
  \begin{tabular}{l|ccc|ccc}
  \toprule
  Dataset & \multicolumn{3}{c|}{\textit{2N5C} \footnotesize{(ID)}} & \multicolumn{3}{c}{\textit{2N5C} \footnotesize{(OOD)}} \\ \midrule
  Variable  & $q_x$ & $q_y$ & $q_z$ & $q_x$ & $q_y$ & $q_z$ \\ \midrule
  \method{} w/o O    & 2.076 & 2.130 & 2.215 & 2.582 & 2.800 & 2.833  \\
  {\RRR{\method{} w/o $\epsilon$}}   & 2.040 & 2.046 & 2.227 & 2.559 & 2.791 & 2.854    \\
  \method{} w/o F   & 2.424 & 2.208 & 2.465 & 2.970 & 2.868 & 3.118    \\
  \method{} w/o D    & 2.119 & 2.083 & 2.171 & 2.785 & 2.759 & 2.829   \\
  \rowcolor{pink} 
  \method{}   & \textbf{1.960} & \textbf{2.029} & \textbf{2.119} & \textbf{2.464} & \textbf{2.734} & \textbf{2.727}   \\ 
  \bottomrule
  \end{tabular}
  \vspace{-0.3cm}
\end{table*}

\begin{table*}[!t]
  \tabcolsep=6.8pt
  \centering
  \caption{Further ablation study on \textit{Springs} (MSE $\times 10^{-2}$) and \textit{5AWL} (MSE $\times 10^{-3}$) with a prediction length of 24.}\label{tab:ablation_further}
  \vspace{0.05cm}
  \begin{tabular}{l|cc|cc|ccc|ccc}
  \toprule
  Dataset & \multicolumn{2}{c|}{\textit{Springs} \footnotesize{(ID)}} & \multicolumn{2}{c|}{\textit{Springs} \footnotesize{(OOD)}} & \multicolumn{3}{c|}{\textit{5AWL} \footnotesize{(ID)}} & \multicolumn{3}{c}{\textit{5AWL} \footnotesize{(OOD)}} \\ \midrule
  Variable & $q$ & $v$ & $q$ & $v$ & $q_x$ & $q_y$ & $q_z$ & $q_x$ & $q_y$ & $q_z$ \\ \midrule
  \method{} w. Single &  0.208 & 0.434 & 0.248 & 0.481   & 3.010 & 3.741 & 3.143 & 3.523 & 4.691 & 3.839    \\
  \method{} w. MLP & 0.152 & 0.454 & 0.179 & 0.514   & 2.997 & 3.638 & 3.240 & 3.605 & 4.492 & 3.908  \\
  \rowcolor{pink} 
  \method{} & \textbf{0.070} & \textbf{0.262} & \textbf{0.088} & \textbf{0.291}   & \textbf{2.910} & \textbf{3.384} & \textbf{2.904} & \textbf{3.374} & \textbf{4.334} & \textbf{3.615}   \\ 
  \bottomrule
  \end{tabular}
  \vspace{-0.1cm}
\end{table*}

\begin{table*}[!h]
  \tabcolsep=6.5pt
  \centering
  \caption{\RRRR{Performance comparison with a model variant, i.e., PGODE-S on \textit{Springs} (MSE $\times 10^{-2}$).} }\label{supresult:pgode-s}
  \resizebox{\textwidth}{!}{
  \begin{tabular}{l|cc|cc|cc|cc|cc|cc}
  \toprule
    Prediction Length & \multicolumn{2}{c|}{12 \footnotesize{(ID)}} & \multicolumn{2}{c|}{24 \footnotesize{(ID)}} & \multicolumn{2}{c|}{36 \footnotesize{(ID)}} & \multicolumn{2}{c|}{12 \footnotesize{(OOD)}} & \multicolumn{2}{c|}{24 \footnotesize{(OOD)}} & \multicolumn{2}{c}{36 \footnotesize{(OOD)}} \\ 
    Variable & $q$ & $v$ & $q$ & $v$ & $q$ & $v$ & $q$ & $v$ & $q$ & $v$ & $q$ & $v$  \\ \midrule
    SocialODE&0.069&	0.260&	0.129&	0.510&	0.415&	2.187&	0.079&	0.285&	0.153&	0.570&	0.491&	2.310\\
    HOPE&0.070&	0.176&	0.456&	0.957&	2.475&	5.409&	0.076&	0.221&	0.515&	1.317&	2.310&	5.996\\
    PGODE&0.035&	0.124&	0.070&	0.262&	0.296&	1.326&	0.047&	0.138&	0.088&	0.291&	0.309&	1.337\\
    PGODE-S &0.038&	0.129&	0.095&	0.298&	0.406&	1.416&	0.051&	0.148&	0.114&	0.319&	0.423&	1.411\\
  \bottomrule
  \end{tabular}
  }
  \vspace{-0.1cm}
\end{table*}

\begin{figure*}[!h]
  \centering
  \includegraphics[width=0.95\textwidth]{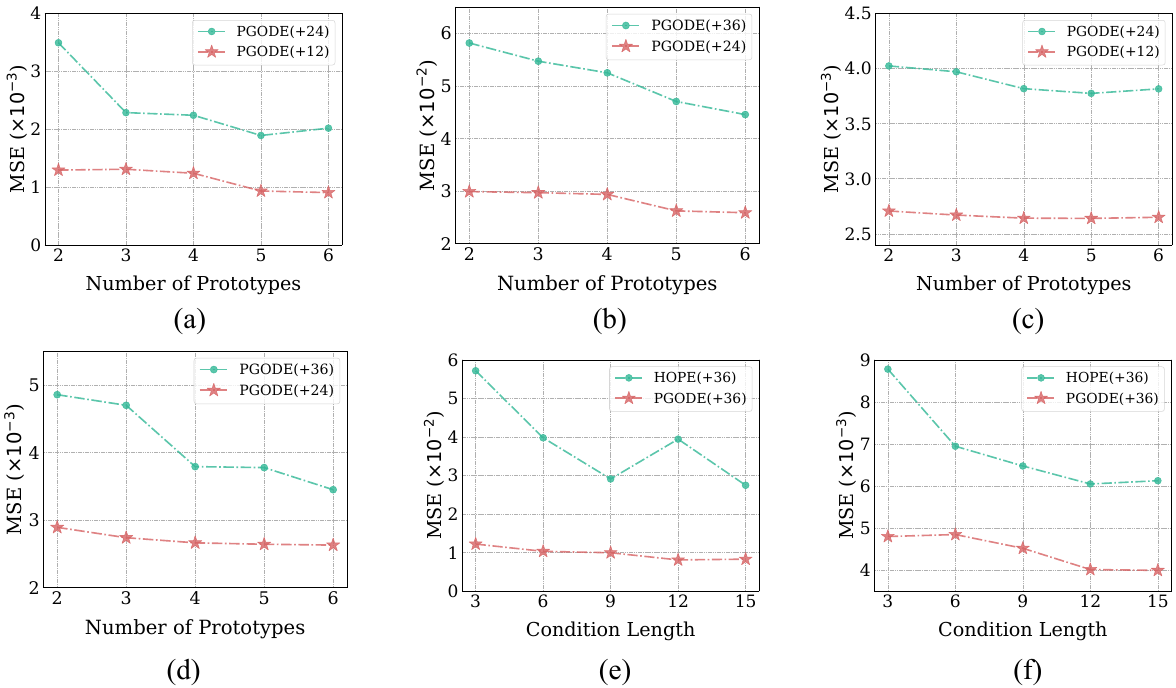}
  \vspace{-0.3cm}
  \caption{{(a),(b),(c),(d) Performance on the OOD test set of \textit{Springs}, \textit{Charged}, \textit{5AWL}, and \textit{2N5C} with respect to four different numbers of prototypes. (e),(f) Performance with respect to different condition lengths on the ID test set of \textit{Springs} and \textit{5AWL}.}}
  \label{fig:sensitivity_sup}
  \vspace{-0.5cm}
\end{figure*}

\vspace{-0.2cm}
\subsection{Ablation Study}\label{sup_ablation}
\vspace{-0.2cm}

We show more ablation studies on \textit{Charged} and \textit{2N5C} to make our analysis complete. In particular, the compared performance of different model variants are shown in Table \ref{tab:ablation_sup}. From the results, we can observe that our full model can outperform all the model variance in all cases, which validates the effectiveness of each component in our \method{} again.
\RR{In addition, we introduce two model variants: (1) \method{} w. MLP, which combines a GNN with an MLP to learn the individualized dynamics; (2) \method{} w. Single, which takes the node representation and the global representation as input with a single message passing function. The compared performance of different model variants is shown in Table \ref{tab:ablation_further}. From the results, we can observe that our full model can outperform all the model variance in all cases. Compared with these variants, our prototype decomposition can involve different GNN bases, which model diverse evolving patterns to jointly determine the individualized dynamics. This strategy can enhance the model expressivity, allowing for more accurate representation learning of hierarchical structures from a mixture-of-experts perspective.}

\RRRR{To enhance the practical utility of our method in real-world settings, we propose a model variant PGODE-S, which utilizes the top-k GNN prototypes instead of all the prototypes to enhance the efficiency. The compared performance can be found in Table \ref{supresult:pgode-s}. We can observe that although PGODE-S includes fewer parameters, its performance is still competitive, which enhances the practical utility of our model.
}

\begin{table*}[!h]
  \tabcolsep=6.6pt
  \caption{Performance comparison with different types of GNNs on \textit{5AWL} (MSE $\times 10^{-3}$). }\label{mainresult:GNN_type}
  \resizebox{\textwidth}{!}{
  \begin{tabular}{l|ccc|ccc|ccc|ccc}
  \toprule
  Prediction Length & \multicolumn{3}{c|}{12 \footnotesize{(ID)}} & \multicolumn{3}{c|}{24 \footnotesize{(ID)}} & \multicolumn{3}{c|}{12 \footnotesize{(OOD)}} & \multicolumn{3}{c}{24 \footnotesize{(OOD)}}  \\ 
   Variable & $q_x$ & $q_y$ & $q_z$ & $q_x$ & $q_y$ & $q_z$ & $q_x$ & $q_y$ & $q_z$ & $q_x$ & $q_y$ & $q_z$ \\ \midrule
   PGODE w. GIN        & 2.126& 2.426& 2.216& 2.968& 3.496& 3.003& 2.327& 3.173& 2.614&  3.573&  4.395&  3.618\\
   PGODE w. GraphSAGE  & 2.136& 2.399& 2.154& 2.935& 3.488& 3.014& 2.294& 3.158& \textbf{2.591}&  3.536&  4.442&  3.620\\
   \cellcolor{pink}\method{} w. GCN (Ours) &  \cellcolor{pink}\textbf{2.098}  &  \cellcolor{pink}\textbf{2.344}  &  \cellcolor{pink}\textbf{2.099}  &  \cellcolor{pink}\textbf{2.910}  &  \cellcolor{pink}\textbf{3.384}  &  \cellcolor{pink}\textbf{2.904}  &  \cellcolor{pink}\textbf{2.217}  &  \cellcolor{pink}\textbf{3.109}  &  \cellcolor{pink}2.593  &  \cellcolor{pink}\textbf{3.374}  &  \cellcolor{pink}\textbf{4.334}  &  \cellcolor{pink}\textbf{3.615}     \\  \bottomrule
  \end{tabular}}
  \vspace{-0.2cm}
\end{table*}

\RRR{\subsection{Performance with Different Backbone Architectures}
In this part, we explore different types of GNNs, e.g., GCN~\cite{kipf2016semi}, GIN~\cite{xu2019powerful} and GraphSAGE~\cite{hamilton2017inductive}. The results are shown in Table \ref{mainresult:GNN_type}. From the results, we can find that GCN is slightly better than other types, which helps us make the choice. Therefore, we use GCN as the default backbone for 5AWL.}

\subsection{Performance with Different Number of Prototypes}

Figure \ref{fig:sensitivity_sup} (a) (b) (c) and (d) record the performance with respect to different numbers of prototypes on different datasets. From the results, we can find that more prototypes would bring in better results before saturation. \RRRR{In practice, we can use the maximum number of prototypes in our device initially and then consider reducing it if it will not influence the performance seriously.}

\subsection{Performance with Different Condition Lengths}

We analyze the influence of different conditional lengths by varying them in $\{3,6,9,12,15\}$, respectively. As shown in Figure \ref{fig:sensitivity_sup} (e) and (f), we can observe that our \method{} can always outperform the latest baseline HOPE, which validates the superiority of the proposed \method{}. 

\RR{\subsection{Efficiency Comparison}

We have conducted a comparison of computation cost. The results are shown in Table \ref{tab:cost} and we can observe that our method has a competitive computation cost. In particular, the performance of HOPE is much worse than ours (the increasement of ours is over 47\% compared with HOPE), while our computational burden only increases a little. Moreover, both the performance and efficiency of I-GPODE are worse than ours. }

\begin{table*}[!h]
  \tabcolsep=6pt
  \centering
  \caption{Comparison of training cost per epoch (s). }\label{tab:cost}
  \vspace{0.05cm}
  \begin{tabular}{lccccccccc}
  \toprule
  Method & LSTM & GRU & NODE & LG-ODE & MPNODE & SocialODE & I-GPODE & HOPE & \method{} (Ours) \\ \midrule
  Springs &  1.53 & 1.04 & 2.21 & 17.39 & 23.33 & 21.02 & 267.08 & 23.86 & 37.03    \\
  Charged & 1.33 & 1.02 & 2.06 & 16.59 & 22.26 & 19.93 & 250.23 & 20.43 & 33.88    \\
  \bottomrule
  \end{tabular}
  \vspace{-0.1cm}
\end{table*}

\subsection{Visualization}

\begin{figure*}[!h]
  \centering
  \includegraphics[width=0.88\textwidth]{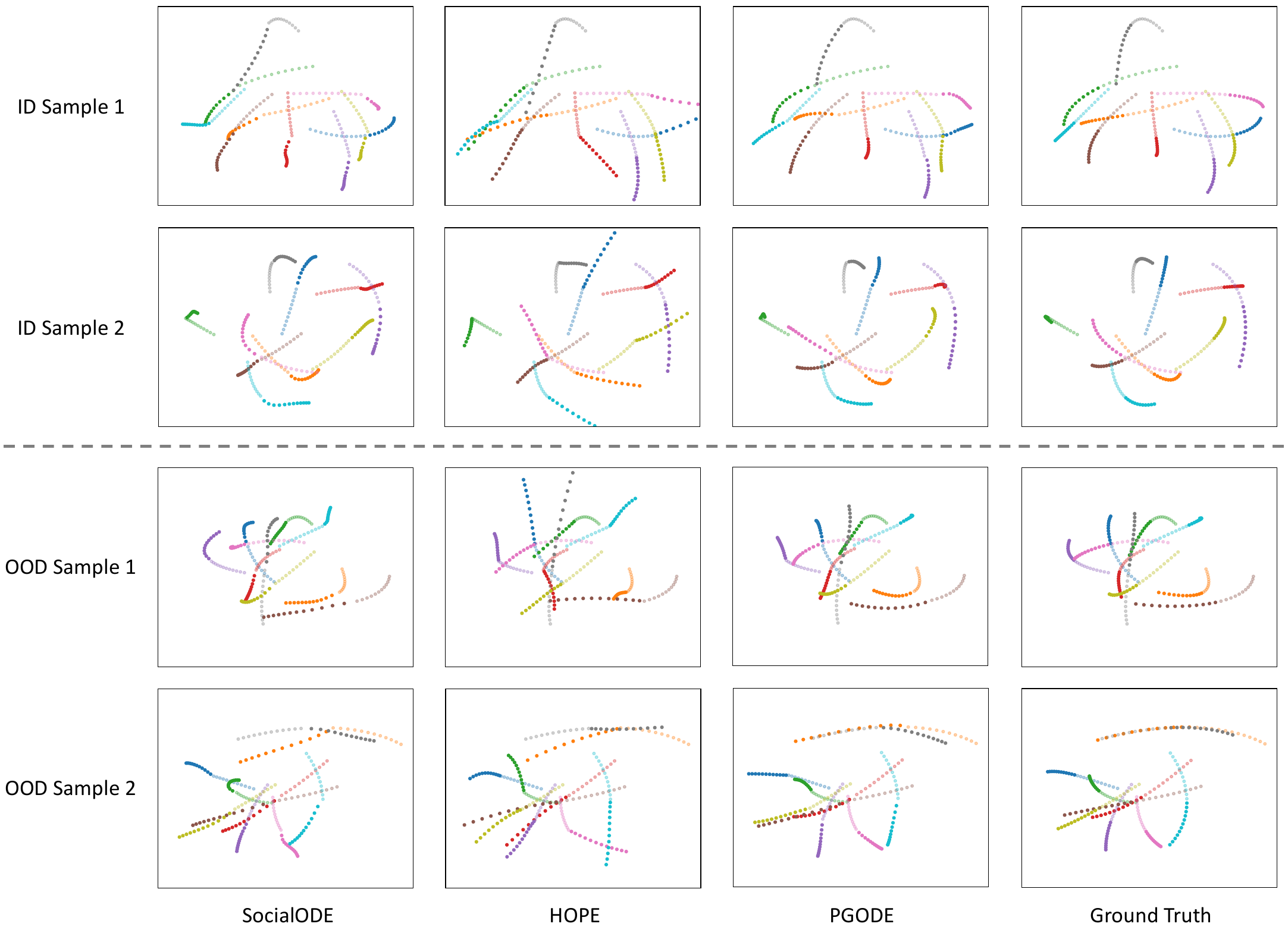}
  \caption{\RRRR{Visualization of different methods on \textit{Springs}. Semi-transparent paths denote observed trajectories and solid paths represent our predictions.}}
  \label{fig:vis_phy_sup}
\end{figure*}

Lastly, we present more visualization of the proposed \method{} and two baselines, i.e., SocialODE and HOPE. We have offered visualization of the predicted trajectory of a sample in Figure \ref{fig:vis_phy} and now we visualize four extra test instances (two ID samples and two OOD samples) in Figure \ref{fig:vis_phy_sup}. From the results, we can observe that the proposed \method{} is capable of generating more reliable trajectories in comparison to the baselines. For instance, our \method{} can discover the correct direction of the orange particle while the others fail in the second OOD instance.

\end{document}